%% file: main.tex
\documentclass{article}
\usepackage{iclr2024_conference,times}
\input{math_commands.tex}

\usepackage{hyperref}
\usepackage{url}

\usepackage[utf8]{inputenc} 
\usepackage[T1]{fontenc}    
\usepackage{hyperref}       
\usepackage{url}            
\usepackage{booktabs}       
\usepackage{amsfonts}       
\usepackage{nicefrac}       
\usepackage{microtype}      
\usepackage{xcolor}         
\usepackage{graphicx}
\usepackage{wrapfig}
\usepackage{multirow}

\usepackage{MnSymbol}%
\usepackage{wasysym}%

\usepackage{algorithmicx}
\usepackage{algorithm}
\usepackage{algpseudocode}
\usepackage{amsmath}
\usepackage{bbm}
\usepackage[utf8]{inputenc}
\usepackage{xspace}

\newtheorem{definition}{Definition}
\newtheorem{theorem}{Theorem}
\usepackage{algorithm}
\usepackage{amsmath}

\newcommand{\mname}{UNR-Explainer}

\newtheorem{assumption}{Assumption}
\usepackage{url}

\usepackage{newfloat}
\usepackage{listings}

\newfloat{listing}{tb}{lst}{}
\floatname{listing}{Listing}

\title{UNR-Explainer: Counterfactual Explanations for Unsupervised Node Representation Learning Models}

\author{Hyunju Kang, Geonhee Han, Hogun Park\\
Department of Artificial Intelligence\\
Sungkyunkwan University\\
Suwon, Republic of Korea \\
\texttt{\{neutor,gunhee8178,hogunpark\}@skku.edu} \\
}

\iclrfinalcopy 
\begin{document}

\maketitle

\begin{abstract}
Node representation learning, such as Graph Neural Networks (GNNs), has emerged as a pivotal method in machine learning. The demand for reliable explanation generation surges, yet unsupervised models remain underexplored. To bridge this gap, we introduce a method for generating counterfactual (CF) explanations in unsupervised node representation learning. We identify the most important subgraphs that cause a significant change in the $k$-nearest neighbors of a node of interest in the learned embedding space upon perturbation. The $k$-nearest neighbor-based CF explanation method provides simple, yet pivotal, information for understanding unsupervised downstream tasks, such as top-$k$ link prediction and clustering. Consequently, we introduce \textbf{\mname{}} for generating expressive CF explanations for \textbf{U}nsupervised \textbf{N}ode \textbf{R}epresentation learning methods based on a Monte Carlo Tree Search (MCTS). The proposed method demonstrates superior performance on diverse datasets for unsupervised GraphSAGE and DGI. Our codes are available at \url{https://github.com/hjkng/unrexplainer}.
\end{abstract}

\input{introduction}
\input{relatedwork}

\input{formulation}
\input{method}

\input{experiment}
\input{experimental_result}

\input{conclusion}

\bibliography{reference}
\bibliographystyle{iclr2024_conference}

\input{supp}
\end{document}

%% file: math_commands.tex
\usepackage{amsmath,amsfonts,bm}

\def\eqref#1{equation~\ref{#1}}

\def\1{\bm{1}}

\DeclareMathAlphabet{\mathsfit}{\encodingdefault}{\sfdefault}{m}{sl}
\SetMathAlphabet{\mathsfit}{bold}{\encodingdefault}{\sfdefault}{bx}{n}



%% file: introduction.tex
\section{Introduction}
Unsupervised node representation learning encodes networks into low-dimensional vector spaces without relying on labels. Prior studies ~\citep{node2vec,gcn, gin, dgi,xu2021infogcl} have demonstrated state-of-the-art performance across diverse tasks. Notably, the top $k$-nearest nodes significantly impact downstream tasks such as link prediction, clustering ~\citep{wang2023split}, outlier detection ~\citep{goodge2022lunar}, and recommendation ~\citep{boytsov2016off}. The growing prevalence of these models in real-world applications has led to an increased demand for understanding their outputs. Particularly, understanding why a node occupies a specific position is pivotal, shedding light on both learned embeddings and related tasks. Top-$k$ link prediction, for instance, predicts potential future links by considering the top-$k$ nearest unobserved nodes to the target node in the embedding space. Local clustering in \cite{wang2023split} efficiently approximates global clusters by categorizing top-k nearest nodes into homogeneous clusters.

Despite the importance of the top-$k$ nearest nodes in unsupervised representation learning, recent studies on explainability have largely overlooked them. While existing research ~\citep{xaignn, gnnexplainer, pgexplainer,zhang2023page} has mainly focused on explaining graph representation learning, relying on class labels, this method often falls short in unsupervised settings. Among the few studies that attempt to explain embedding vectors, \cite{tinduc} utilize a hierarchical clustering model to represent learned embeddings in a taxonomy, providing a comprehensive view of the structure but limited in explaining individual nodes. TAGE \citep{tage} provides explanations for each instance by identifying subgraphs with high mutual information with the learned embeddings. However, TAGE is limited in its capacity for counterfactual reasoning as it doesn't optimize for counterfactual properties.

\begin{wrapfigure}{l}{0.5 \textwidth}
    \includegraphics[width=\linewidth]{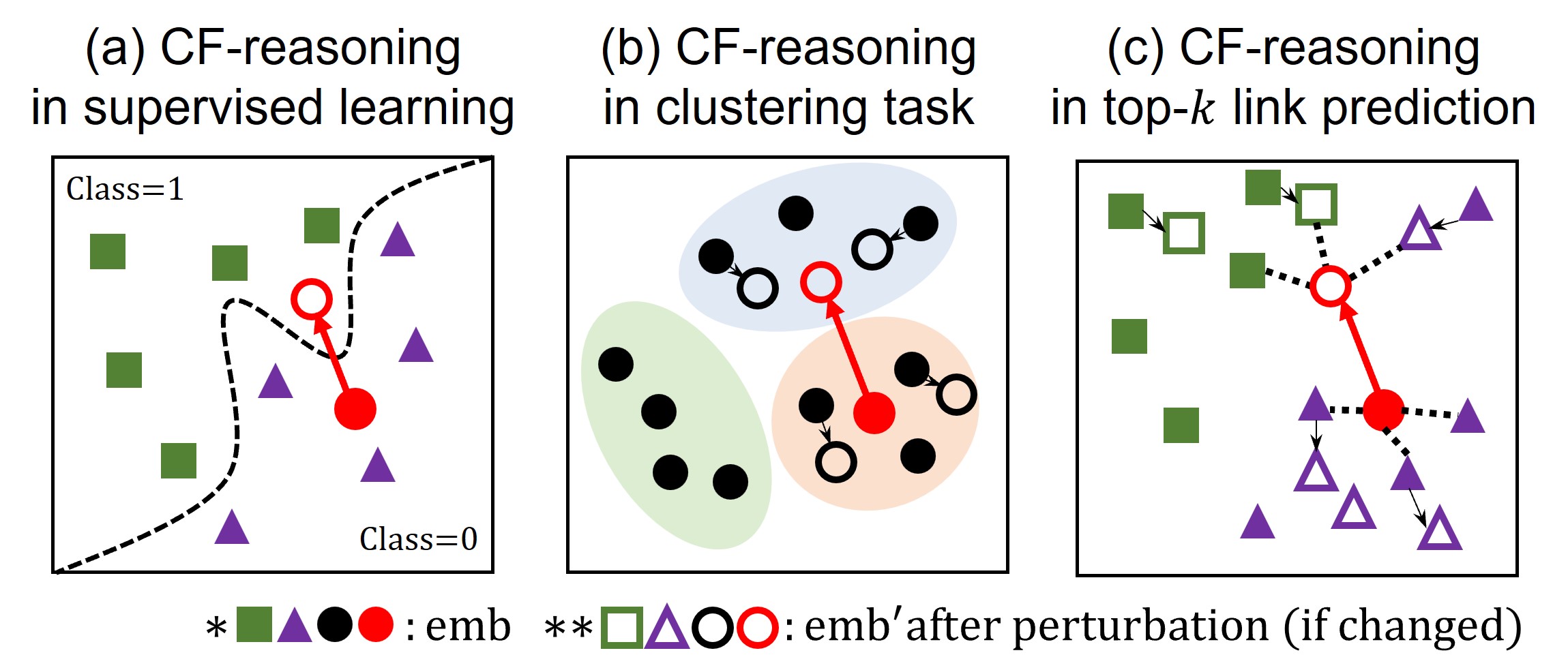} 
    \centering
    \vspace{-5mm}
    \caption[width=0.9\linewidth]{Scenarios for counterfactual (CF) reasoning in downstream tasks. Best viewed in color.}
    \vspace{-2mm}
    \label{fig:counterfactual}
\centering
\end{wrapfigure}

Counterfactual (CF) reasoning methods \citep{cfgnn, gem, rc-Explainer, cff, clear} offer human-interpretable explanations by addressing the question "If certain edges had not occurred, would the prediction label $Y$ have remained unchanged?" Despite active research on CF reasoning in supervised settings for graph domains, its application in unsupervised settings remains underexplored.
To explain a node of interest in the embedding space, we employ CF reasoning in unsupervised settings by minimally perturbing edges crucial for the current top-$k$ nearest nodes, yielding disparate results. Figure~\ref{fig:counterfactual}-(a) illustrates general CF reasoning in supervised settings, identifying edges causing label changes. However, this method is inapplicable in unsupervised settings due to the absence of class labels and decision boundaries. By utilizing top-$k$ nearest nodes, we can explain learned embedding vectors and related tasks in unsupervised settings. For instance, Figure~\ref{fig:counterfactual}-(b) depicts CF-reasoning in a clustering task, where perturbing pertinent edges change the node's local neighborhood in the embedding space, potentially changing cluster assignments. Similarly, Figure~\ref{fig:counterfactual}-(c) illustrates CF-reasoning in a top-$k$ (here, $k=3$) link prediction task, where identifying edges that significantly affect top-$k$ neighbors may yield different results in embedding-based link prediction tasks. 

To provide the CF reasoning in unsupervised learning, we introduce \mname{} for unsupervised node representations \citep{hamilton2017inductive, dgi}. Firstly, we define the CF explanation for unsupervised models by measuring the change of the top-$k$ nearest neighboring nodes after perturbation, formulating it as \textit{Importance}. To identify the explanatory subgraph, we employ Monte Carlo Tree Search (MCTS) \citep{mctssurvey} tailored for our problem, inspired by Random Walk with Restart \citep{pan2004automatic}. Our novel subgraph traversal approach offers several advantages: 1) The subgraph by \mname{} is sparse yet efficient as CF explanations. 2) \mname{} finds more expressive subgraphs than existing MCTS methods by alleviating search bias, as elucidated in Theorem~\ref{theorem}. \mname{} provides meaningful CF explanations for unsupervised settings as demonstrated by extensive experiments. Given the significance of top-$k$ nearest nodes in downstream tasks, our method facilitates the exploration of significant factors affecting embedding vectors, thereby expanding the applicability of CF reasoning in unsupervised learning.

%% file: relatedwork.tex
\section{Related works} \label{sec:related}

\textbf{XAI for unsupervised models.} While active studies have been conducted on explaining supervised models, the interest in XAI for unsupervised models \citep{neon, withoutlabel} has just ignited. In \cite{neon}, neuralization-propagation (NEON) is proposed to explain k-means clustering and kernel density estimation using layer-wise relevance propagation. On the other hand, the authors in \cite{withoutlabel} propose a method to explain the representation vectors without labels, providing a label-free importance score function to highlight the important features and examples. However, both approaches are limited to apply to graph-structure data without consideration of the relationship between instances.

\textbf{XAI for unsupervised node representation.} Recent studies \citep{tinduc, tage,park2023generating} have explored explainability for unsupervised node representation learning models on graphs. \cite{tinduc} propose a method to explain the learned network embedding by describing the inherent hierarchical structures through a taxonomy using the hierarchical clustering model. This approach converts the representation vector into a taxonomy for a comprehensive view but is limited to explaining the prediction of a node of interest. On the other side, TAGE \citep{tage} explains individual nodes in the form of a subgraph sharing high mutual information with the embedding vector. 
Consequently, the explanation from the TAGE could expand to various downstream tasks predicting node or graph labels by simply updating the gradient of the class label in the downstream model. However, TAGE has a limitation in providing counterfactual explanations by minimal perturbation to change the class label. GRAPH-wGD~\citep{park2023generating} is also relevant to our proposed method, while it aims to generate global explanations for learned models.

\textbf{Counterfactual explanations.} 
Most counterfactual reasonings for graphs explain predictions in supervised learning. Studies such as CF-GNNExplainer~\citep{cfgnn}, RCExplainer~\citep{rc-Explainer}, and CF$^2$~\citep{cff} remove the important edges to emphasize the significant impact of edges in input on the prediction of class labels. CLEAR~\citep{clear}, and GCFExplainer~\citep{gcexplainer} encourage the counterfactual result toward desired labels by adding new nodes or edges and removing the existing ones simultaneously. The mentioned methods are limited to applying unsupervised models, for instance, CF-GNNExplainer~\citep{cfgnn} as a pioneer of counterfactual explanation for the graph domain employs its counterfactual loss which is limited to applying unsupervised models. RCExplainer~\citep{rc-Explainer} leverages the decision boundaries to tackle the robustness of counterfactual explanations, but the decision region heavily relies on class labels. In the same manner, other explainability methods make it difficult to exclude the information from the class labels. Contrasting with existing XAI methods in supervised learning, \mname{} is designed to generate counterfactual explanations in unsupervised learning contexts without relying on labels.

%% file: formulation.tex
\section{Problem formulation}\label{sec:prob:def}

\subsection{Node representation learning}

\textbf{Notation} An input graph $\mathcal{G}= \{ \mathcal{V}, \mathcal{E} , \mathcal{W} \}$ has a set of vertices $\mathcal{V} = \{v_0, ..., v_N \}$ with the node feature matrices $\mathbf{X} \in \mathbb{R}^{N \times d}$ and a set of edges $\mathcal{E} =\{ (v_i, v_j)| v_{i}, v_{j} \in \mathcal{V} \}$ with a edge's weight set $ \mathcal{W} = \{w_{(v_i, v_j)} \ | \ w_{(v_i, v_j)} \in [0,1], (v_i, v_j) \in \mathcal{E} \}$. When the node $v$ has edges as $\mathcal{E}_v =\{ (v_v, v_u)|(v_i, v_j) \in \mathcal{E}  \}$, the neighboring nodes of the node $v$ is expressed as $u\in\mathcal{N}_v$. 

\textbf{Node representation learning} such as GraphSAGE \citep{hamilton2017inductive} aims to extract representation vectors of nodes by aggregating its attributes and information that is sampled from the local neighborhood. A trained unsupervised node representation model for an input graph \(\mathcal{G}\) are represented as \(f_{\text{unsup}}(\mathcal{G}) = \mathbf{emb}\). For a node \(v \in \mathcal{V}\), the representation vector is given by \(f_{\text{unsup}}(\mathcal{G}, v) = \mathbf{emb_{v}}\). It is calculated by stacking the embedding vector $\mathbf{h_v}$ for a node $v$ following as $\mathbf{h_v} = \sigma(\mathbf{M_{self} x_v}  + \mathbf{M_{agg}} \frac{1}{|\mathcal{N}_v|}\sum_{u\in\mathcal{N}_v}\mathbf{x_u})$ where $\mathbf{M_{self}}$ and $\mathbf{M_{agg}}$ are learnable parameter matrices. In unsupervised settings, it optimizes that nearby nodes are close and distant nodes are far apart in the embedding space by defining $k$-hop neighbors as positive samples and other nodes as negative ones.



\subsection{Counterfactual Explanation for Unsupervised Node Representation learning}

Given $f_{unsup}(\mathcal{G}, v) = \mathbf{emb_v}$, a counterfactual reasoning aims to identify an explanation $\mathcal{G}_s = \{ \mathcal{V}_s, \mathcal{E}_s, \mathcal{W}_s \} \subset \mathcal{G}$ such that removing or weakening edges in $\mathcal{E}_s$ from $\mathcal{G}$ to form $\mathcal{G}'=\{ \mathcal{V}, \mathcal{E}-\mathcal{E}_s, \mathcal{W}-\mathcal{W}_s \}$ results in $\mathbf{emb_{v}} \not= \mathbf{emb'_{v}}$ as $f_{unsup}(\mathcal{G}', v) = \mathbf{emb'_v}$ while minimizing the difference between $\mathcal{G}$ and $\mathcal{G}'$.
The counterfactual explanation $\mathcal{G}_s$ for an unsupervised node representation learning is inherently difficult to define the counterfactual property as $\mathbf{emb_{v}} \not= \mathbf{emb'_{v}}$ being subject to $f_{unsup}(\mathcal{G}, v)= \mathbf{emb_{v}}$ and $f_{unsup}(\mathcal{G}', v)= \mathbf{emb'_{v}}$. It is problematic because any edge is possible to be the trivial counterfactual explanation after perturbation causing $\mathbf{emb_{v}} \not= \mathbf{emb'_{v}}$. Thus, defining the meaningful difference between two embedding vectors is necessary to provide a relevant counterfactual explanation for the target node $v$ of interest. Since top-$k$ nearest neighboring nodes are critical to downstream tasks, we exploit top-$k$ nearest neighboring nodes to define the counterfactual explanation for the unsupervised model. This approach is well aligned with models such as GraphSage and node2vec \citep{node2vec, hamilton2017inductive} since their objective function optimizes the embedding vector in which similar nodes are located relatively close while dissimilar ones are far away. Thus, the top-$k$ nearest neighboring nodes share similar features for related downstream tasks. Therefore, we utilize the top-$k$ nearest neighboring nodes to define the counterfactual property for unsupervised representation learning models as follows:


\begin{definition}
   Given $f_{unsup}(\mathcal{G}, v)= \mathbf{emb_{v}}$ and $f_{unsup}(\mathcal{G}', v)= \mathbf{emb'_{v}}$, a hyper-parameter $k$, and a function $kNN(\mathbf{emb}, v, k)$, a counterfactual property for the node $v$ as $\mathbf{emb_{v}} \not= \mathbf{emb'_{v}}$ is satisfied when $kNN(\mathbf{emb}, v, k) \not= kNN(\mathbf{emb'}, v, k)$.
\label{def:cf}
\end{definition}
 
\subsection{Measuring \textit{Importance} for Counterfactual Explanations}

Upon the Definition \ref{def:cf}, we define a measure to quantify the \textit{Importance} of the subgraph $\mathcal{G}_s$ as counterfactual explanation below:


\vspace{-1em}
\begin{equation}
    \resizebox{0.94\hsize}{!}{$Importance(f_\text{unsup}(\cdot), \mathcal{G}, \mathcal{G}_s, v, k) = |set(kNN(f_\text{unsup}(\mathcal{G}), v, k))-set(kNN(f_\text{unsup}(\mathcal{G}-\mathcal{G}_s), v, k))|/k$},
    \label{def:impfunction}
\end{equation}

\noindent where $kNN$ is found based on the Euclidean distance. Using \textit{Importance}, we define a counterfactual explanation $\mathcal{G}_s$   for the embedding vector in unsupervised node representation learning as below: 

\begin{definition}
  Given $f_{unsup}(\mathcal{G}, v) = \mathbf{emb_v}$, a counterfactual reasoning aims to identify a subgraph $\mathcal{G}_s = \{ \mathcal{V}_s, \mathcal{E}_s, \mathcal{W}_s \} \subset \mathcal{G}$ such that removing or weakening edges in $\mathcal{E}_s$ from $\mathcal{G}$ to form $\mathcal{G}'=\{ \mathcal{V}, \mathcal{E}-\mathcal{E}_s, \mathcal{W}-\mathcal{W}_s \}$ maximizes the \textit{Importance} while minimizing the difference between $\mathcal{G}$ and $\mathcal{G}'$.
\end{definition}

If the subgraph $\mathcal{G}_s$ is critical to the target node's node embedding, the effect of the perturbation must be significant based on the counterfactual assumption. Hence, the subgraph $\mathcal{G}_s$ is employed as the explanation for the target node's node embedding. If the \textit{Importance} is $0$, we observe no effect on the top-$k$ neighboring nodes in the updated embedding space. Meanwhile, when the \textit{Importance} is $1$, all surrounding neighbors are changed, fully satisfying the counterfactual property. We note that we leverage an LSH hashing \citep{shrivastava2014asymmetric} to obtain the nearest neighbors efficiently. To obtain the perturbed $\mathcal{G}'$, we employ the perturbation method which weakens the weight of the input graph edges equivalent to the edges of the subgraph. An overview of \textit{Importance} function is provided in Algorithm \ref{alg:algorithm1} in Appendix~\ref{algo_impt}. Additionally, we theoretically analyze the upper bound of \textit{Importance} upon GraphSAGE in Theorem ~\ref{theorem1} in Appendix~\ref{thrm2}.

%% file: method.tex

\section{Our Proposed method}

Our goal is to find an explanation subgraph $\mathcal{G}_s$ as a counterfactual explanation with the highest \textit{Importance} score to change the top-$k$ neighbors in embedding space while maintaining the minimum edge size $|\mathcal{G}_s|$. It is defined as

\begin{equation}
\operatorname*{arg\,min}_{|\mathcal{G}_s|} \operatorname*{max}_{\mathcal{G}_s}Importance(f_{unsup}(\cdot), \mathcal{G}, \mathcal{G}_s, v, k).
\label{equ:mcts}
\end{equation}

\noindent Due to the exponential number of existing subgraphs, an efficient traversal method is required. Thus, we leverage the Monte Carlo Tree Search (MCTS) \citep{sutton2018reinforcement} which is one of reinforcement learning and well-known to outperform in large search spaces~\citep{silver2017mastering}. MCTS utilizes a search tree in which a series of actions is selected from a root to a leaf node resulting in the maximum reward according to the pre-defined reward function. We tailor the MCTS to suit our problem setting by applying the \textit{Importance} as the reward to obtain the explanation subgraph $\mathcal{G}_s$. This method brings the advantages of not only searching subgraphs efficiently thanks to the reinforcement algorithm but also searching the subgraph sparse yet expressive. 


\subsection{Subgraph traversal method}
We initialize a search tree in which each node $N_i$ has properties as $\{S(N_i), A(N_i)\}$ where $S(N_i)$ denotes a state and $A(N_i)$ represents a set of actions. The node $N_i$ in the tree corresponds to the node $v \in \mathcal{V}$ in the input graph $\mathcal{G}$, and $S(N_i)$ indicates an index of the node $v$. The edges in the tree mean actions $a\in A(N_i)$. The action is to select one node in the search tree equivalently meaning that we add a new edge or node gradually in the candidate subgraph. To start in the initial search tree, we set a root node $N_0, S(N_0)=v$ as the index of the node $v$ to explore subgraphs around the target node $v$. At each iteration $t$, the algorithm travels from the root node $N_0$ to a leaf node $N_i$, resulting in the trajectory $\tau_t = \{ (N_0, a_1),..., (N_i, a_j)\}$. Then, we can transform the path $\tau_t$ to a subgraph $\mathcal{G}_s =\{\mathcal{V}_s, \mathcal{E}_s, \mathcal{W}_s \}$, $\mathcal{V}_s=\{S(N_0), ..., S(N_i) \}$, $\mathcal{E}_s =\{(S(N_i), S(N_k))| S(N_i), S(N_k) \in \mathcal{V}_s, a_j=N_i, a_{j+1} = N_k, a_0=N_0 \}$ defined as a function $Convert(\tau)=\mathcal{G}_{\tau}$. Consequently, searching the subgraph $\mathcal{G}_s$ by searching the path via the MCTS algorithm is possible. A pair $(N_i, a_j)$ of node $N_i$ and action $a_j$ stores statistics $\{ C(N_i, a), Q(N_i, a), L(N_i, a) \}$ which are described as following: \textbf{1)} $C(N_i, a)$ is the number of visits for selecting the action $a \in A(N_i)$ at the node $N_i$. \textbf{2)} $Q(N_i, a)$ is an action value that is calculated as the maximum value in a set of rewards $L(N_i, a)$ aiming to obtain the highest important subgraph. \textbf{3)} $L(N_i, a)$ is a list of obtained rewards from reward function $R(\tau)$ in every iteration when $(N_i, a) \in \tau$. \textbf{4)} $R(\tau)$ is the reward function to calculate \textit{Importance} as $R(\tau)=Importance(f_\text{unsup}(\cdot), \mathcal{G}, \mathcal{G}_{\tau}, k,v)$. We obtain the subgraph $\mathcal{G}_s$ from $Convert(\tau)=\mathcal{G}_{\tau}$. Utilizing the statistics, the algorithm searches the path with the highest \textit{Importance} based on the upper confidence boundary (UCB) in \citep{mctssurvey} as $UCB(N_i, a)=  Q(N_i, a) + \lambda \ P \ \sqrt{\frac{ln(C(N_0, a))}{C(N_i, a)}}$ :

\begin{equation}
a^* = \operatorname*{arg\,max}_a UCB(N_i, a), a \in A(N_i)
\label{equ:ucb}
\end{equation}

\noindent While the term $Q(N_i, a)$ promotes choosing the action with a higher reward, the latter term relating to $C(N_i, a)$ encourages exploration by choosing the less visited action. A hyperparameter $\lambda$ controls the magnitude of exploration to balance exploitation and exploration. Additionally, the term $P$ intends to provide useful information for the guidance of exploration. In the case of the term $P=1$ as constant, UCB does not utilize the additional guidance. Further, we discuss the benefit of our setting compared to other employments of MCTS and prove the efficiency of our exploration method through the ablation study in Table \ref{t:mcts}.  

\begin{algorithm}[!t]
	\caption{\mname{} with restart} 
	\begin{algorithmic}[1]
	\State \textbf{Input}: trained model $f_{unsup}(\cdot)$, input graph $\mathcal{G}$, $kNN$ parameter $k$, target node $v$, exploration term $\lambda$, a probability $p \sim Uniform(0, 1)$ and restart probability $p_{restart}$
	\State \textbf{Initialization}: Initialize $\mathcal{G}_l=\{\}$, step $t=0$, and set a root node $N_0, S(N_0) = v$
	\While{termination condition}
    \State $curNode \leftarrow N_0; \ \text{a trajectory } \tau = \{\};$ $t \ += \ 1;$ Expand the child nodes of $curNode$
    \State Select $a \in A(curNode)$ by $UCB(curNode, a)$ in Eq.(\ref{equ:ucb})
	   \While{$curNode$ is not leaf node}
	   \If{$p \leq$ $p_{restart}$}
	       \State $curNode \leftarrow N_0$; Append a pair $(N_0, a)$ into the trajectory $\tau$
              \State Obtain $a^*$ using Eq.(\ref{equ:ucb}) and select randomly action $a \in A(N_0)\setminus a^*$ 
              \State $curNode \leftarrow N_i$ by action $a$ 
	   \EndIf           
	       \State Select $a \in A(curNode)$ by $UCB(curNode, a)$ in Eq.(\ref{equ:ucb})
              \State Append a pair $(curNode, a)$ into the trajectory $\tau$
              \State $curNode \leftarrow N_i$ by action $a$ 
	   \EndWhile
         \If{the leaf node has no child}
              \State Expand the child nodes of $curNode$
              \State Select $a \in A(curNode)$ by $UCB(curNode, a)$ in Eq.(\ref{equ:ucb})
              \State Append a pair $(curNode, a)$ into the trajectory $\tau$
    	 \EndIf   
        \State Calculate the reward by $R(\tau)$
        \State Backpropagate the reward and visit counts to $(N_i, a) \in \tau$
        \State $\mathcal{G}_{\tau} \leftarrow Convert(\tau)$; Append $\mathcal{G}_{\tau}$ into the list $\mathcal{G}_l$
    \EndWhile
	\State \textbf{Return} $\mathcal{G}_s$ from $\mathcal{G}_l$ using Eq.(\ref{equ:mcts})
	\end{algorithmic}
	\label{alg:algorithm3}
\end{algorithm}

\subsection{The Proposed \mname{}}

The vanilla MCTS algorithm in our setting has an issue of low expressiveness in the selection step in the aspect of the UCB formula theoretically. Because the existing UCB-based search policy tends to generate the subgraph in a depth-first manner rather than in a breadth-first manner, the previously appeared node is seldom selected by the UCB-based formula, leading not to explore close hop neighboring nodes even its importance. Based on an empirical assumption demonstrated experimentally in studies~\citep{rc-Explainer, SubgraphX}, we present a theoretical analysis of the expressiveness limitation inherent in the vanilla MCTS algorithm as follows.

\begin{theorem}
Let be an action $a_j$ at an arbitrary node $N_i$ in a trajectory $\tau_t=\{(N_0,a_1), ..., (N_i,a_j)\}$, resulting in a next node $N_k$. Suppose $a_1, a_2 \in A(N_k)$ where the action $a_1$ leads to a node $N_m$ with a state function $S(N_m) \neq S(N_i)$, and the action $a_2$ leads to a node $N_n$ with $S(N_n) = S(N_i)$. Then, we have

\vspace{-3mm}
\begin{equation}
UCB(N_k,a_1) \geq UCB(N_k,a_2)
\end{equation}
\vspace{-4mm}

\label{theorem}
\end{theorem}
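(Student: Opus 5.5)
The idea is to compare the two terms of $UCB(N_k,a)=Q(N_k,a)+\lambda P\sqrt{\ln C(N_0,a)/C(N_k,a)}$ separately for $a_1$ and $a_2$. We show that $a_1$ weakly dominates $a_2$ in both the exploitation term $Q$ and the exploration term. We take $P=1$, as in the vanilla MCTS discussed before the theorem. We also use the empirical assumption cited from \citep{rc-Explainer, SubgraphX}: adding a new node or edge to a candidate explanation does not decrease its \textit{Importance}.

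\textbf{Step 1: the subgraphs produced by $a_2$.} Since $S(N_n)=S(N_i)$, choosing $a_2$ returns to a graph node already in the trajectory. Under $Convert(\tau)$ this adds the edge $(S(N_k),S(N_i))$, which is the same undirected edge already contributed by $(N_i,a_j)$. So $\mathcal{V}_s$ and $\mathcal{E}_s$ stay unchanged, and $R(\tau)$ for any trajectory passing through $a_2$ equals the reward of the trajectory truncated at $N_k$. By contrast, $a_1$ leads to $S(N_m)\neq S(N_i)$. This yields a subgraph containing the previous one plus at least one new edge, and possibly a new node.

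\textbf{Step 2: compare the rewards.} By the monotonicity assumption, every reward in $L(N_k,a_1)$ is at least the reward of the common prefix subgraph. That prefix reward is exactly what $a_2$ contributes to $L(N_k,a_2)$. Since $Q$ is the maximum of $L$, we get $Q(N_k,a_1)\ge Q(N_k,a_2)$. Some care is needed if $a_2$ has been extended further down the tree. In that case the continuation after $a_2$ from $N_n$ can be matched with an equal or larger subgraph reachable through $a_1$, so the maximum over $L(N_k,a_2)$ is still dominated. I expect this matching of deeper trajectories to be the main obstacle. The first thing I would try is to argue on the reward of the immediate trajectory only, since that is what the algorithm evaluates when the expansion happens.

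\textbf{Step 3: compare the exploration terms.} Rewards only propagate along trajectories whose subgraphs are informative. Under the UCB policy, the action with the larger $Q$ is selected at least as often as its competitor. Combining this with a symmetric initialization (unvisited actions are treated equally), we argue by induction on the iteration $t$ that the exploration bonus of $a_2$ never grows enough to overturn the $Q$ gap. Alternatively, if the visit counts are equal at the moment of comparison, the bonus terms coincide and Step 2 alone gives $UCB(N_k,a_1)\ge UCB(N_k,a_2)$. I would present the result under this equal-visit regime, because that is where the UCB comparison is actually made at expansion. I would then remark that the inequality propagates as long as the induction hypothesis holds.
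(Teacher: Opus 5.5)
Your argument is essentially the paper's: it too assumes $C(N_k,a_1)=C(N_k,a_2)$ so that only $Q$ matters, notes that $a_1$ contributes a new edge under $Convert(\cdot)$ while $a_2$ contributes none, and applies the monotonicity assumption (Assumption~1) to these one-step subgraphs to get $Q(N_k,a_1)\geq Q(N_k,a_2)$ and hence the UCB inequality. The equal-visit regime and immediate-trajectory comparison you settle on are exactly the hypotheses the paper's proof uses (the first explicitly, the second implicitly), so drop the induction in Step~3 and the deeper-trajectory matching in Step~2: neither works in general, because the exploration bonus can overturn a $Q$ gap when visit counts differ, and a deep continuation under $a_2$ need not be dominated by anything in the subtree of $a_1$.
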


We provide a proof and detailed analysis of Theorem \ref{theorem} in Appendix~\ref{thrm1}. To ensure that the resulting subgraphs of the traversal are more expressive and accurate, we introduce a new selection policy in the following section. The overview of the algorithm is described in Algorithm \ref{alg:algorithm3}. 

\textbf{Selection with a new policy.} At every iteration, a restart probability $p_{restart} \in [0,1], p \sim Uniform(0, 1)$ is sampled. When $p < p_{restart}$, it returns to the root node and randomly explores another path $a \in A(N_0) \setminus a^*$ excluding the optimal one from the root. Thus, it promotes more connections centered around the target node. Implementing the restart probability as a new selection policy has several advantages: \textbf{1)} It reduces the bias towards deep-first traversal, increasing the likelihood of generating diverse candidate subgraphs. \textbf{2)} It helps to prevent getting stuck on an isolated island or infinite iteration. When $p \ge p_{restart}$, the algorithms select the child nodes with the higher UCB score in Eq. (\ref{equ:ucb}). The selection continues until it reaches the leaf node. It is described in line from $5$ to $15$ of Algorithm \ref{alg:algorithm3}.

\textbf{Expansion.} When arriving at the leaf node, the tree expands if the number of visits to the leaf node is more than one. The tree adds the child nodes from the neighboring nodes of the current node's state in the input graph $\mathcal{G}$. However, considering all neighbor nodes as child nodes causes exponential search space. Therefore, we randomly select nodes from the neighbors of the leaf nodes in the input graph $\mathcal{G}$ to compute them more efficiently, setting the expansion number $E$ to be equal to the average degree of the input graph $\mathcal{G}$. It is described in line from $16$ to $20$ of Algorithm \ref{alg:algorithm3}.

\textbf{Simulation.} We transform the obtained trajectory $\tau$ to the subgraph $\mathcal{G}_{\tau}$. As The generated subgraph $\mathcal{G}_{\tau}$ is a candidate of the important subgraph stored in a list $\mathcal{G}_l$, we measure the \textit{Importance} of the subgraph $\mathcal{G}_{\tau}$ for the reward. It is described in line $21$ of Algorithm \ref{alg:algorithm3}.

\textbf{Backpropagation.} Finally, we update the reward and the number of visits to the statistics of related nodes in the trajectory. The number of visits $C(N_i, a)$ is renewed for $C(N_i, a)+1$. Then, we append the obtained reward to a list of rewards $L(N_i, a)$. Consequently, the action value $Q(N_i, a))$ is updated by $max(L(N_i, a))$. These updated statistics are exploited for the next selection to explore the subgraph with higher \textit{Importance}. It is described in line $22$ of Algorithm \ref{alg:algorithm3}.

\textbf{Termination condition.} Our subgraphs have two main properties as the counterfactual explanation. First, the importance is close to 1, affecting all the top-$k$ neighbors after perturbation. Second, the size of the subgraph is minimal. To achieve the goal, we explore the subgraph setting the terminal condition as $\textit{Importance} == 1.0$ with limited but enough steps $T$. When the algorithm ends after meeting the condition, we select the subgraph with a minimum size of the subgraph among the one with maximum \textit{Importance}. As a result, we effectively search the explanatory subgraphs satisfying the properties of counterfactual explanations.

%% file: experiment.tex
\section{Experiments}

\subsection{Datasets}
The experiments are conducted on three synthetic datasets ~\citep{gnnexplainer, pgexplainer} and three real-world datasets from PyTorch-Geometrics \citep{pytorch}. The synthetic datasets (BA-Shapes, Tree-Cycles, and Tree-Grid \citep{gnnexplainer}) are used, containing network motifs such as houses, cycles, and grid-structure motifs respectively. These motifs serve as the ground truth of the datasets to evaluate the explanation method \citep{xaignn}. The real-world datasets (Cora, CiteSeer, and PubMed \citep{node2vec}) are citation networks commonly used in graph domain tasks. Each node indicates a paper and each edge notes a citation between papers. The features of the nodes are represented as bag-of-words of the papers and each node is labeled as the paper's topic. Additionally, we exploit the NIPS dataset from Kaggle to employ the case study of our method. The statistics of the datasets are reported in Table \ref{t1} in the Appendix.

\subsection{Baseline methods}
In unsupervised settings, we compare \textbf{\mname{}} with seven baselines as follows: 1) \textbf{1hop-2N} is a subgraph of an ego network in which edges are randomly removed until the number of nodes is two 2) \textbf{1hop-3N} is a subgraph of an ego network in which edges are randomly removed until the number of nodes is three. 3) \textbf{$k$-NN graph} is a subgraph whose nodes and edges are top-$5$ nearest neighbors in the embedding space. 4) \textbf{RW-G} generates a subgraph by the random walk algorithm by traveling neighboring nodes from the target node $v$. For a fair comparison, the number of nodes and edges in the graph is mostly equal to one from our proposed method. 5) \textbf{RW-G w/ Restart} are generated in the same way as RW-G but applying random walk with restart. 6) \textbf{Taxonomy induction} \citep{tinduc} outputs the clusters as the global explanation of the embedding vector. We expect that the subgraphs in the cluster can be utilized as a local explanation. 7) \textbf{TAGE} \citep{tage} generates a subgraph with high mutual information of the learned embedding vector.

\subsection{Evaluation metrics}
We evaluate explanation subgraphs in the counterfactual aspect by the following metrics: 1) \textbf{Importance} of the explanation subgraph $\mathcal{G}_s$ is utilized to measure the impact in the embedding space. 2) \textbf{Size} \citep{cff} shows how the explanation is effective with minimal perturbation on synthetic and real-world datasets. 3) \textbf{Precision / Recall} \citep{cff} on average w.r.t ground truth are measured on the synthetic datasets. 4) \textbf{Validity}~\citep{cfsurvey} is commonly used to effectiveness related to $f(\mathcal{G}) \not = f(\mathcal{G}')$ in supervised learning. By utilizing \textit{Importance}, we calculated the averaged validity for the unsupervised model as $Validity(\mathcal{G}_s) = \mathbbm{1}[Importance(f_{unsup}(\cdot), \mathcal{G}, \mathcal{G}_s, v, k) = 1.0)]$. 5) \textbf{Probability of Necessity as PN} \citep{cff} in top-$k$ link prediction tasks are used to measure how predictions as $Hit$@5 are changed to demonstrate the effectiveness of the explanations. 6) \textbf{Homogeneity} represents the percentage that the top-$k$ neighbors of a node of interest belong to the same cluster of the node of interest in the node embedding space after perturbing the generated explanation for each baseline. Original Homogeneity in the learned embedding space before perturbation is expressed as the input graph in Table~\ref{t:clustering}. 7) \textbf{$\Delta$ Homogeneity} means the difference between the $Homogeneity$ of Input graph and the $Homogeneity$ of each baseline.

%% file: experimental_result.tex
\subsection{RQ1: Performance of \mname{} and other baseline models}

\begin{table}
\centering
\caption{Evaluation of CF explanations on synthetic datasets in unsupervised settings w.r.t. ground-truth. Prc, Rcl, and Impt indicate Precision, Recall, and Importance. The best performances on each dataset are shown in \textbf{bold}.}
\vspace{2mm}
\scalebox{0.76}{\begin{tabular}{c|cccc|cccc|cccc}
\hline
& \multicolumn{4}{c|}{BA-Shapes} & \multicolumn{4}{c|}{Tree-Cycles} & \multicolumn{4}{c}{Tree-Grids}\\
Methods & Prc $\uparrow$ & Rcl $\uparrow$ & Impt $\uparrow$ & Size $\downarrow$ & Prc $\uparrow$ & Rcl $\uparrow$ & Impt $\uparrow$ & Size $\downarrow$ & Prc $\uparrow$ & Rcl $\uparrow$ & Impt $\uparrow$ & Size $\downarrow$ \\
\hline
        1hop-2N & 0.934          & 0.156        & 0.492          & \textbf{1.0}  & 0.900          & 0.150          & 0.277          & \textbf{1.0}  & \textbf{0.965} & 0.161          & 0.683          & 1.0  \\ 
        1hop-3N & \textbf{0.943} & \textbf{0.314}        & 0.817          & 2.0           & 0.895 & 0.298          & 0.994          & 2.0           & 0.959          & 0.320          & 0.633          & 2.0           \\
        $k$-NN graph & 0.025          & 0.012        & 0.392          & 1.8           & 0.017          & 0.006          & 0.148          & 1.5           & 0.013          & 0.004          & 0.337          & 1.1           \\
        RW-G & 0.890          & 0.263        & 0.597          & 1.8           & 0.862          & 0.302          & 0.591          & 2.1           & 0.925          & 0.459          & 0.834          & 3.4           \\
        RW-G w/ Restart & 0.881          & 0.268        & 0.683          & 1.9           & 0.840          & 0.300          & 0.641          & 2.1           & 0.917          & 0.496          & 0.922          & 3.6           \\  
        Taxonomy induction & 0.731          & 0.310        & 0.292          & 1.8           & 0.765          & 0.298          & 0.454          & 1.8           & 0.377          & 0.140          & 0.369          & \textbf{0.9}           \\
        TAGE & 0.422          & 0.225        & 0.220          & 2.4           & 0.290          & 0.230          & 0.166          & 1.7           & 0.415          & 0.360          & 0.260          & 2.3           \\
        \textbf{\mname{}}  & 0.923          & 0.288        & \textbf{1.000} & 1.9           & \textbf{0.903} & \textbf{0.324} & \textbf{1.000} & 2.1           & 0.943          & \textbf{0.519} & \textbf{0.994} & 3.6 \\ 
        \hline
\end{tabular}}
\vspace{-3mm}
\label{t2}
\end{table}

We demonstrate the performance of \mname{} showing our explanation graphs change the top-$k$ nearest nodes and impacts related downstream tasks in node classification, link prediction, and clustering in unsupervised settings in Table \ref{t2}, \ref{t3}, and \ref{t:clustering}. From Table \ref{t2} on synthetic datasets, we demonstrate that \mname{} describes the ground truth, showing the highest $Recall$ and $Importance$ among baselines on BA-Shapes and Tree-Cycles datasets. In the experiment, 1hop-3N demonstrates the highest \( Precision \) on both the BA-Shapes and Tree-Cycles datasets, and 1hop-2N exhibits the highest \( Precision \) on the Tree-Grids dataset. Regarding the BA-Shape dataset, there are 80 house-structured motifs, each comprising 5 nodes as the ground truth, and only a few edges in these motifs are connected to the base BA graph. When we generate explanations as 1hop-2N and 1hop-3N for nodes within these house motifs, the precision tends to be high. This is due to the increased probability that the explanation will include another node from the house-structured motif, thereby aligning with the ground truth. Similarly, for other synthetic datasets constructed in the same manner, 1hop-2N and 1hop-3N exhibit high precision but low recall. $k$-NN graph in the embedding space is limited to constructing the subgraph valid in the input graph since graph neural networks encounter challenges preserving graph topology. RW-G w/ Restart demonstrates our proposed selection policy, showing higher recall than RW-G with a slightly larger size. Explanations from Taxonomy induction based on global clusters are not as sufficient as local CF explanations. On the Tree-Grid dataset, \mname{} does not reach the ground truth due to the minimal constraint of the size. In the case of TAGE, it reaches the highest recall on Tree-Grids datasets but the largest size and the lowest $Importance$ without satisfying counterfactual measurements.

Table \ref{t3} shows the results on real-world datasets, where \mname{} records the best score in all metrics except size. Explanations by \mname{} impact most of the top-$k$ neighbors after perturbation, showing the highest $Validity$ and $Importance$ with a smaller size of subgraphs than Taxonomy induction and TAGE. When we apply the embedding vector into top-$5$ link prediction tasks, originally the accuracy of the prediction is $0.570, 0.575,$ and $0.55$ respectively on Cora, CiteSeer, and Pubmed datasets. After the perturbation of each explanation, the predictions w.r.t $PN$ change the most in the case of \mname{} due to the significant difference of top-$k$ neighbors. None of Taxonomy induction and TAGE optimize the counterfactual term, resulting in lower $PN$. On PubMed datasets, we sample 10\% of nodes and evaluate the change of probability in node classification.

In Table \ref{t:clustering}, we demonstrate that explanatory graphs affect clustering tasks in unsupervised settings. When $k$ is equal to 20, the majority of the top-$k$ neighbors in the embedding space typically belong to the same cluster of a node of interest, as observed by the $homogeneity$ values of  $0.866, 0.863, \text{and} \ 0.697$ on the Cora, CiteSeer, and PubMed datasets, respectively. After perturbating of found explanations by \mname{}, the $homogeneity$ of original top-$k$ neighbors drops significantly at the most among baselines, which means that a node of interest is no longer a part of the cluster but a different one as a counterfactual result. Hence, explanations by \mname{} affect top-$k$ neighbors in the embedding but also the output of related downstream tasks.


\begin{table}
\centering
\caption{Evaluation of CF explanations on real-world datasets in unsupervised settings. Vld, Impt, and PN indicate Validity, Importance, and Probability of Necessity. The best performances on each dataset are shown in \textbf{bold}.}
\vspace{1mm}
\scalebox{0.75}{\begin{tabular}{c|cccc|cccc|cccc}
\hline
& \multicolumn{4}{c|}{Cora} & \multicolumn{4}{c|}{CiteSeer} & \multicolumn{4}{c}{PubMed}\\

Methods & Vld $\uparrow$ & Impt $\uparrow$ & PN $\uparrow$ & Size $\downarrow$ & Vld $\uparrow$ & Impt $\uparrow$ & PN $\uparrow$ & Size $\downarrow$ & Vld $\uparrow$ & Impt $\uparrow$ & PN $\uparrow$ & Size $\downarrow$ \\
\hline
        1hop-2N & 0.210 & 0.465 & 0.006 & \textbf{1.0} & 0.340 & 0.551 & 0.020 & \textbf{1.0} & 0.485 & 0.623 &  0.000 & \textbf{1.0} \\
        1hop-3N & 0.428 & 0.659 & 0.021 & 2.0 & 0.503 & 0.714 & 0.035 & 1.6 & 0.639 & 0.724 &  0.000 & 1.5 \\
        $k$-NN graph & 0.253 & 0.593 & 0.019 & 3.3 & 0.243 & 0.609 & 0.028 & 3.3 & 0.022 & 0.090 &  0.000 & \textbf{1.0} \\  
        RW-G &  0.265 & 0.544 & 0.019 & 3.6 & 0.393 & 0.643 & 0.024 & 2.7 & 0.478 & 0.631 &  0.027 & 2.0 \\  
        RW-G w/ Restart & 0.327 & 0.614 & 0.019 & 3.8 & 0.453 & 0.696 & 0.029 & 2.8 & 0.514 & 0.662 & 0.029 & 3.1 \\
        Taxonomy induction & 0.243 & 0.548 & 0.023 & 9.7 & 0.236 & 0.528 & 0.037 & 4.9 & 0.035 & 0.061 & 0.000 & 1.8 \\
        TAGE & 0.232 & 0.482 & 0.016 & 5.4 & 0.254 & 0.487 & 0.031 & 3.7 & 0.082 & 0.268 &  0.012 & 11.8 \\
        \textbf{\mname{}}  & \textbf{0.911} & \textbf{0.960} & \textbf{0.051} & 3.8 & \textbf{0.778} & \textbf{0.910} & \textbf{0.054} & 2.8 & \textbf{0.847} & \textbf{0.903} & \textbf{0.034} & 3.7 \\                        
\hline
\end{tabular}}
\label{t3}
\vspace{-5mm}
\end{table}

\begin{table}
\centering
\caption{Evaluation of CF  explanations upon clustering tasks on real-world datasets. Hmg and  $\Delta$ Hmg indicate Homogeneity and the change of Homogeneity.}
\vspace{1mm}
\scalebox{0.8}{\begin{tabular}{c|cc|cc|cc}
\hline
& \multicolumn{2}{c|}{Cora} & \multicolumn{2}{c|}{CiteSeer} & \multicolumn{2}{c}{PubMed}\\
Methods & Hmg $\downarrow$ & $\Delta$ Hmg $\uparrow$ & Hmg $\downarrow$ & $\Delta$ Hmg $\uparrow$ & Hmg $\downarrow$ & $\Delta$ Hmg $\uparrow$\\
\hline
        Input graph & 0.866 & - & 0.863 & - & 0.697 & - \\ 
        1hop-2N & 0.695 & 0.171 & 0.564 & 0.298 &  0.372 & 0.325 \\
        1hop-3N & 0.520 & 0.347 & 0.401 & 0.462 & 0.299 & 0.398 \\
        $k$-NN graph & 0.621 & 0.245 & 0.629 & 0.234 & 0.636 & 0.061 \\
        RW-G &  0.646 & 0.220 & 0.487 & 0.376 &0.364 & 0.333 \\
        RW-G w/ Restart & 0.594 & 0.272 & 0.441 & 0.422 & 0.331 & 0.366 \\
        Taxonomy induction & 0.598 & 0.268 & 0.595 & 0.268 & 0.579 & 0.118 \\
        TAGE & 0.627 & 0.240 & 0.569 & 0.294 & 0.555 & 0.142 \\
        \textbf{\mname{}} & \textbf{0.315} & \textbf{0.551} & \textbf{0.272} & \textbf{0.591} & \textbf{0.243} & \textbf{0.454} \\  \hline
\end{tabular}}
\label{t:clustering}
\vspace{-5mm}
\end{table}

\subsection{RQ2: A case study in community detection on NIPS datasets}
\begin{wrapfigure}{l}{0.48 \textwidth}
\vspace{-5mm}
\includegraphics[width= \linewidth]{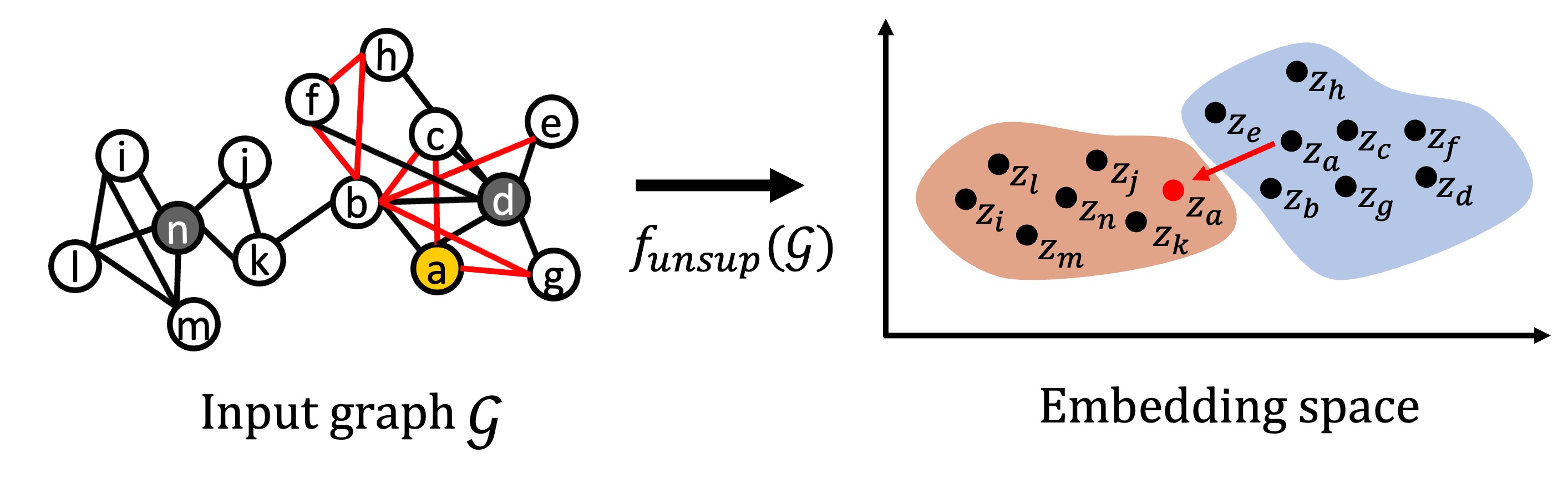}
\vspace{-5mm}
\caption{Case study on NIPS datasets.}
\vspace{-5mm}
\label{fig:case-study}
\end{wrapfigure}
Our explanatory graphs reveal the important connections that impact the top-$k$ neighboring nodes in embedding space, particularly for here case study on a social network. We apply our method on a co-author network dataset of the NIPS conferences \citep{benhamner_2017}. Upon applying GraphSAGE in unsupervised settings, we extract significant subgraphs using \mname{}, illustrating its effectiveness in impacting a k-means clustering task.
In Figure~\ref{fig:case-study}, a target node, highlighted as yellow node $a$ representing Caglar Gulcehr, is initially a part of the cluster of Yoshua Bengio, known for publications primarily related to deep learning and general machine learning, represented by node $d$. In this figure, node positions in the embedding space are determined through t-SNE learning~\citep{JMLR:v9:vandermaaten08a}. After perturbing the found explanation graph, illustrated by red edges, the node aligns with the cluster of Nicolas Heess, notable for publications primarily related to reinforcement learning, as represented by node $n$.

\subsection{RQ3: Ablation study with other MCTS-based subgraph traversal methods}

\begin{wraptable}{l}{0.5 \textwidth}
\vspace{-2em}
\centering
\caption[width=0.8 \linewidth]{Ablation study of the MCTS-based method on Cora dataset.}
\vspace{+1em}

\scalebox{0.9}{\begin{tabular}{c|ccc}
\hline
 Name & Time(s) & Impt & Size  \\
\hline
 MCTS & 11.30 & 0.942 & \textbf{3.43} \\
 SubgraphX-1 & 176.30 & 0.936 & 28.49 \\
 SubgraphX-2 & 146.02 & 0.937 & 25.66 \\
 MCTS-Avg & 10.11 & 0.949 & 3.62 \\
 MCTS-Prx1 & 9.85 & 0.951 & 3.71 \\
 MCTS-Prx2 & 7.14 & 0.956 & 3.59 \\ 
 MCTS-Prx3 & 10.55 & 0.953 & 3.60 \\
\mname{} w/o r & 8.11 & 0.955 & 3.67 \\
\textbf{\mname{}} & \textbf{4.63} & \textbf{0.960} & 4.5\\
\hline

\end{tabular}}
\label{t:mcts}
\vspace{-2mm}
\end{wraptable}

We evaluate variants of the MCTS algorithm considering the design of the action, the expansion, the UCB-based formula, and the restart as shown in the result in Table \ref{t:mcts}. As evaluation metrics, \textit{Importance}, explanation size, and the mean inference time (s) per node are measured. As a result, \mname{} shows the best results considering the importance and inference time (s). In SubgraphX ~\citep{SubgraphX}, its action is to prune equally meaning to remove nodes. Since counterfactual explanations must be minimal with high validity, SubgraphX fails to find the subgraph particularly when it deals with the high-degree target node showing bigger size than other baselines. We note that Prx means proximity as the term $P$ for better guidance in Eq \ref{equ:ucb}. However, none of the Prx helps guide the important subgraph. On the contrary, our proposed \mname{} makes it possible to search the sparse yet expressive subgraph efficiently with high importance. Detailed information including the node's proximity (Prx) is written in Appendix~\ref{exp:var:mcts}.

\subsection{RQ4: Parameter sensitivity}
\vspace{-5mm}
\begin{wrapfigure}{L}{0.48 \textwidth}
\includegraphics[width=0.93 \linewidth]{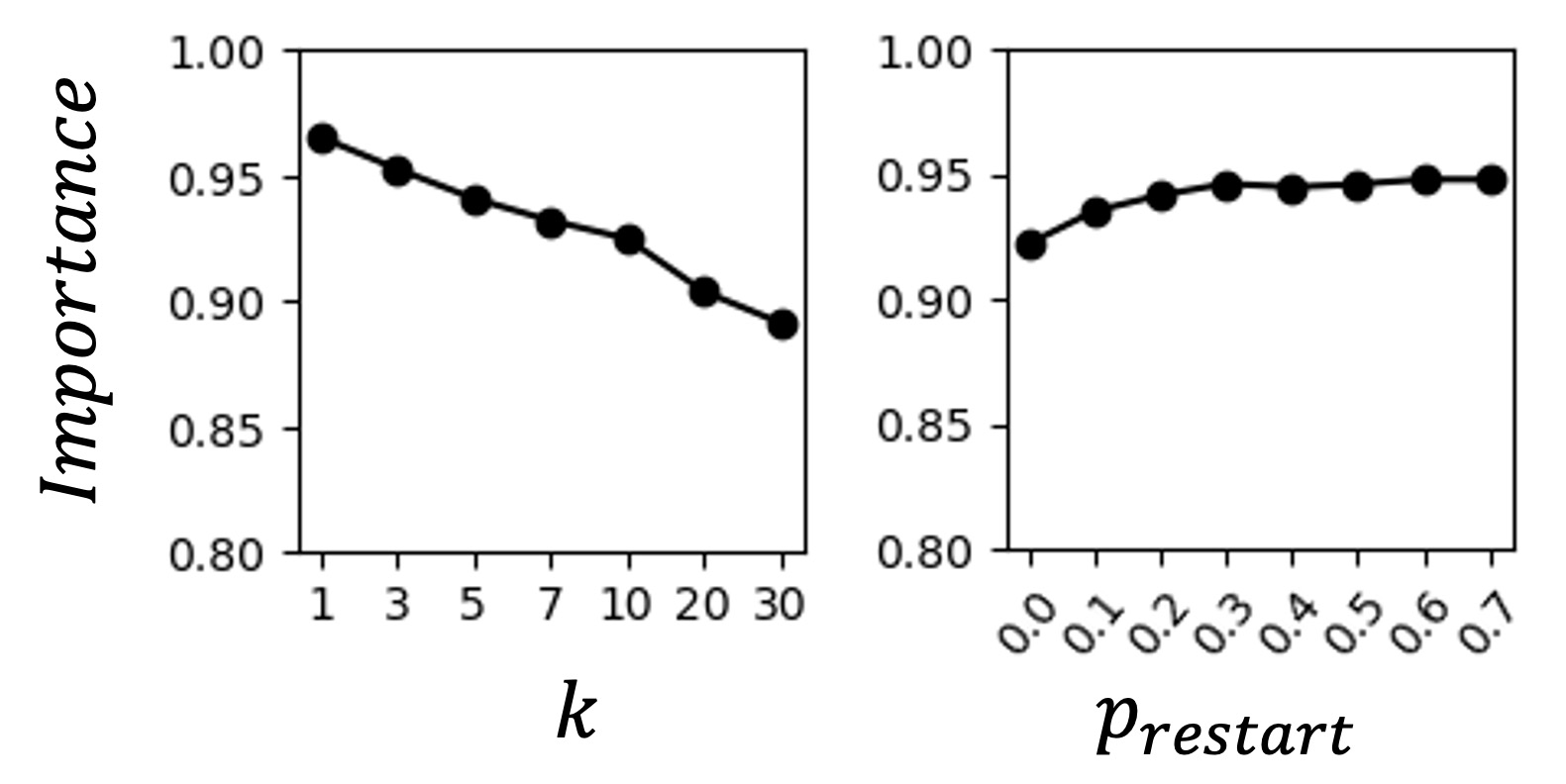} 
\vspace{-5mm}
\caption[width=0.9 \linewidth]{The study of parameter sensitivity for the number of the nearest neighbors as $k$ and the restart probability $p_{restart}$.}
\label{fig:figure6}
\vspace{-2mm}
\end{wrapfigure}

We evaluate the robustness of our model relating to crucial hyperparameters, such as the top-$k$ number of neighbors $k$ and the restart probability $p_{restart}$, using the GraphSAGE model on the Cora dataset. The results are shown in Figure \ref{fig:figure6}. As the hyperparameter $k$ increases, the importance decreases. This is because the hyperparameter $k$ determines the range of the local region in the embedding space, and a larger region containing nodes further away from the target node is less affected. Furthermore, the importance increases until the restart probability $p_{restart}$ reaches 0.2, the value used in our model's setting. Additionally, we observe that the exploration term as $\lambda$ does not affect the models' performance much.

%% file: conclusion.tex

\section{Conclusion}
In this paper, we propose a counterfactual explanation method, \mname{}, to explain the unsupervised node representation vector of a single node. We define $k$-nearest neighbor-based counterfactual explanation and propose a new selection policy to identify the important subgraph that prioritizes returning back to the target node to find multiple paths in the search tree. Our experimental results show that the important subgraphs generated by \mname{} have higher local impacts in both unsupervised and supervised downstream tasks compared to other baseline methods.

\newpage

\section{Acknowledgment}
We thank the anonymous reviewers for their valuable feedback and insights. Hogun Park is the corresponding author. This work was supported by the National Research Foundation of Korea (NRF) (2021R1C1C1005407); and by the Institute of Information \& Communications Technology Planning \& Evaluation (IITP) grant funded by the Korea government (MSIT): (No. 2019-0-00421, Artificial Intelligence Graduate School Program (Sungkyunkwan University)) and (No. RS-2023-00225441, Knowledge Information Structure Technology for the Multiple Variations of Digital Assets). This research was also supported by the Culture, Sports, and Tourism R\&D Program through the Korea Creative Content Agency grant funded by the Ministry of Culture, Sports and Tourism in 2024 (Project Name: Research on neural watermark technology for copyright protection of generative AI 3D content, 25\%).

%% file: supp.tex
\newpage
\appendix

\section{Overview of UNR-Explainer}
The overview of UNR-Explainer is provided in Figure \ref{overview}. Initially, the input $\mathcal{G}$ and a learned model $f_{unsup}(\cdot)$ are provided. These are used in the importance function described in Eq.~(\ref{def:impfunction}) under Algorithm~\ref{alg:algorithm3} to traverse the most contributive subgraph.

\begin{figure*}
    \center
    \includegraphics[width=1.0 \linewidth]{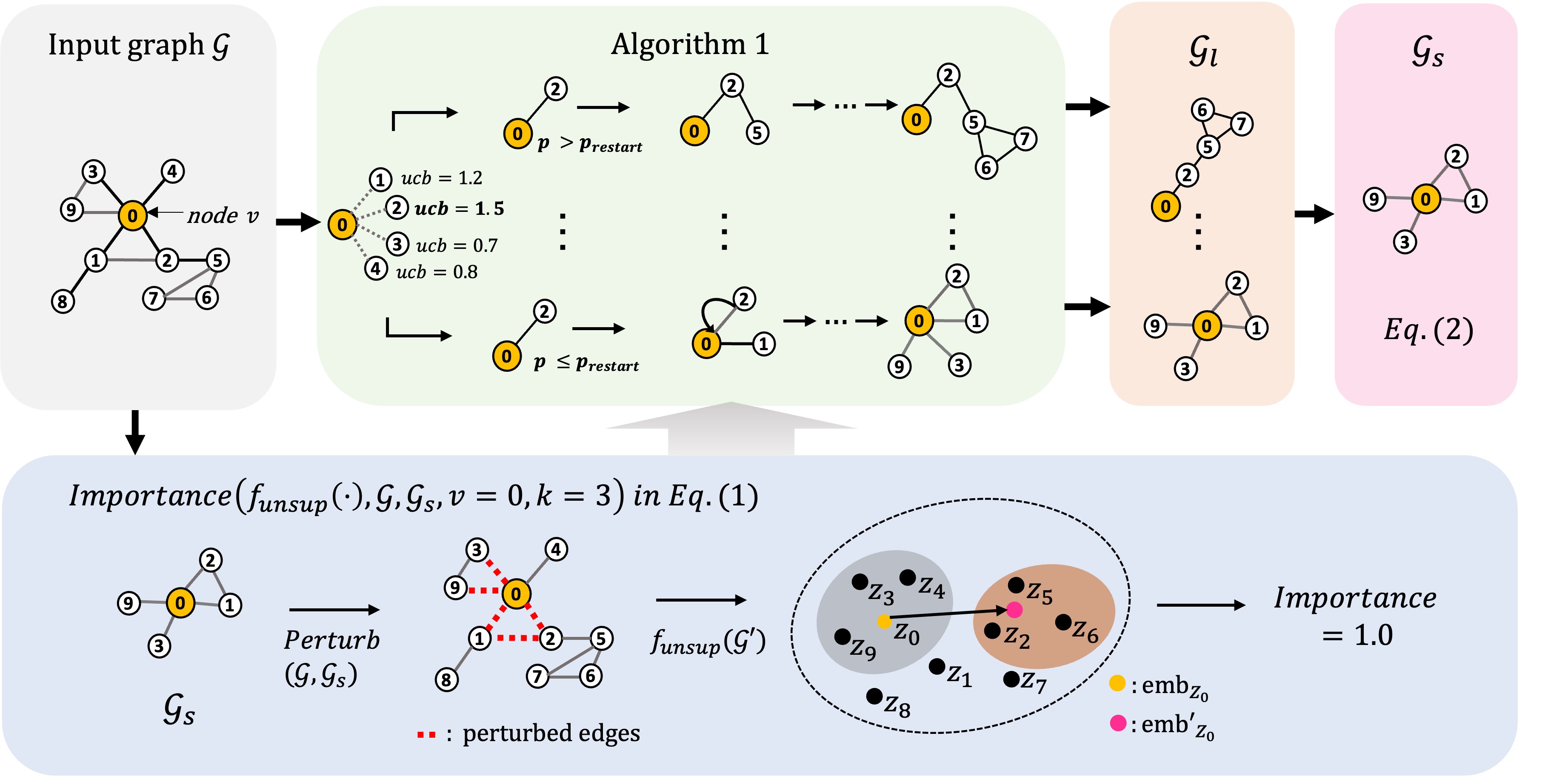}
    \vspace{-5mm}
    \caption{Overall procedure of \mname{} to explain the unsupervised node representation embedding of node 0 (target node). Our proposed method is to search the most important subgraph $\mathcal{G}_s$ as the explanation for the target node. (Best viewed in color.)}
    \label{overview}
\end{figure*}

\section{The algorithm of \textit{Importance}}
\label{algo_impt}
We describe the process of calculating the \textit{Importance} for the subgraph $\mathcal{G}_s$ as detailed in Algorithm \ref{alg:algorithm1}. First, the perturbation is to remove the edges of the subgraph $\mathcal{G}_s$ from the input graph $\mathcal{G}$. Subsequently, we construct the perturbed graph $\mathcal{G}_{perturbed}$ and obtain the $k$-nearest neighbors ($kNN$) for the target node $v$. Finally, the \textit{Importance} of $\mathcal{G}_s$ is evaluated as the difference between the top-$k$ nearest neighbors in $\mathcal{G}$ and $\mathcal{G}_{perturbed}$.


 \begin{algorithm}
	\caption{$Importance(f_{unsup}(\cdot), \mathcal{G}, \mathcal{G}_s, v, k)$} 
	\begin{algorithmic}[1]
	\State Input: embedding model $f_{unsup}(\cdot)$, input graph $\mathcal{G}=(\mathcal{V}, \mathcal{E}, \mathcal{W})$, subgraph $\mathcal{G}_s=(\mathcal{V}_s, \mathcal{E}_s, \mathcal{W}_s)$, target node $v$, the number of top-$k$ neighbors in $kNN$ as $k$
	   \State $\mathcal{G}_{perturbed} \leftarrow (\mathcal{V}, \mathcal{E} - \mathcal{E}_s,  \mathcal{W})$
	   \State $\mathbf{emb} \leftarrow f_{unsup}(\mathcal{G})$ \
	   \State $\mathbf{emb'} \leftarrow f_{unsup}(\mathcal{G}_{perturbed})$ \ \
		\State $O \leftarrow kNN(\mathbf{emb}, v, k)$ \ \# get the top-$k$ neighbors of $v$ in $\mathbf{emb}$
		\State $N \leftarrow kNN(\mathbf{emb'}, v, k)$ \ \# get the top-$k$ neighbors of $v$ in $\mathbf{emb'}$
        \State Return $|(set(O)-set(N)|/k$ \ \# $Importance$
	\end{algorithmic}
	\label{alg:algorithm1}
\end{algorithm}

\section{Theoretical analysis}

\subsection{Expressiveness of MCTS-based subgraph traversal}
\label{thrm1}
We present a theoretical analysis of the expressiveness limitation inherent in the vanilla MCTS algorithm, specifically its restriction to searching for the optimal explanatory subgraph. We analyze the low expressiveness in the selection step in the aspect of the UCB formula theoretically. In the vanilla MCTS, generating the subgraph in a breadth-first manner requires a previously appeared node as a newly expanded child node in the tree to return back to the root node. Yet, the previously appeared node is seldom selected by the UCB-based formula, as the action doesn't yield any gain due to no difference in $E_s$. Before presenting this analysis, we introduce an empirical assumption that has been demonstrated experimentally in studies~\citep{rc-Explainer,SubgraphX}, as follows:

\begin{assumption}
The graph that is perturbed by more edges of $v \in \mathcal{V}$ results in changes to more components of $kNN(\mathbf{emb},v,k)$ in the embedding space.
\label{assumption}
\end{assumption}

\noindent  Based on Assumption \ref{assumption}, we theoretically analyze the expressiveness of the vanilla MCTS subgraph traversal procedure in Theorem~\ref{theorem} and present proof as follows.

\paragraph{Theorem \ref{theorem}} \textit{Let be an action $a_j$ at an arbitrary node $N_i$ in a trajectory $\tau_t=\{(N_0,a_1), ..., (N_i,a_j)\}$, resulting in a next node $N_k$. Suppose $a_1, a_2 \in A(N_k)$ where the action $a_1$ leads to a node $N_m$ with a state function $S(N_m) \neq S(N_i)$, and the action $a_2$ leads to a node $N_n$ with $S(N_n) = S(N_i)$. Then, we have}

\vspace{-3mm}
\begin{equation}
UCB(N_k,a_1) \geq UCB(N_k,a_2)
\end{equation}
\vspace{-4mm}

\textit{Proof.}
When the algorithm selects an action $(N_k,a_1)$ or $(N_k,a_2)$ using Equation (\ref{equ:ucb}) at the node $N_k$, we assume that the number of visit counts is identical as $C(N_k,a_1)=C(N_k,a_2)$. Then only term $Q(N_k, a_{\forall})$ matters in Equation (\ref{equ:ucb}). To calculate the \textit{Imporance} as the reward, the trajectory $\tau_{t}$ is converted into the subgraph $\mathcal{G}_{{\tau}_t}=\{ \mathcal{V}_{{\tau}_t}, \mathcal{E}_{{\tau}_t}\}$ using the function $Convert(\tau_t)$. Since the action $(N_k,a_1)$ adds a new edge in the trajectory $\tau_{t+1}$, the size of edges in a subgraph $\mathcal{G}_1$ is equal to $|\mathcal{E}_s|+1$. On the contrary, the action $(N_k,a_2)$ does not add any edge such that the size of edges in a subgraph $\mathcal{G}_2$ is equal to $|\mathcal{E}_s|$. Accordingly, the size of edges is different as $|\mathcal{E}(\mathcal{G}_1)| > |\mathcal{E}(\mathcal{G}_2)|$. Using the Assumption~\ref{assumption}, their \textit{Importance} values have the following relationship:

\vspace{2mm}
\resizebox{0.95\hsize}{!}{
\begin{minipage}{\linewidth}
\begin{equation}
\begin{aligned}
Importance(f_{unsup}(\cdot), \mathcal{G}, \mathcal{G}_1, v, k) \geq Importance(f_{unsup}(\cdot), \mathcal{G}, \mathcal{G}_2, k, v)
\end{aligned}
\end{equation}
\end{minipage}  
}
\vspace{2mm}

\noindent Therefore, $Q(N_k, a_1) \geq Q(N_k, a_2)$ is obtained. Finally, we can see that $UCB(N_k,a_1) \geq UCB(N_k,a_2)$ because of $C(N_k, a_1) = C(N_k, a_2)$. $\blacksquare$

As demonstrated in Theorem \ref{theorem}, we have ascertained that the importance of a subgraph increases with the number of edges it contains. With the vanilla MCTS, generated subgraphs exhibit lower expressiveness, showing less preference for generating in a breadth-first manner, which requires selecting previously visited nodes, rather than a depth-first manner. Additionally, the vanilla method also carries the risk of being stuck in an isolated region with infinite iterations.

\subsection{Upper bound of \textit{Importance} function}
\label{thrm2}
We theoretically analyze the \textit{Importance} function by deriving the upper bound.

\begin{theorem}
\label{theorem1}
Assuming that $f_{unsup}(\cdot)$ is a trained unsupervised GraphSAGE model with one layer, using a mean aggregator, let $\mathcal{G}$ be an input graph, and $\mathcal{G}_s$ be a candidate explanatory subgraph for a target node $v \in \mathcal{V}$ where $\mathcal{V}$ is a list of nodes in $\mathcal{G}$. When we consider the top-$k$ neighbors for node $v$, the \textit{Importance} of the subgraph $\mathcal{G}_s$ can be bounded as follows: 
\vspace{-5mm}

\begin{equation*}
    Importance(f_\text{unsup}(\cdot), \mathcal{G}, \mathcal{G}_s, v, k) \leq \frac{1}{|\mathcal{V_\text{top-k}}|}\displaystyle\sum_{u\in \mathcal{V_\text{top-k}}} \mathbf{\mathbf{C_{Lp}}}||\mathbf{\mathbf{M_{agg}}}||_2|(||\Delta_v - \Delta_u||_2)|
\end{equation*}
\end{theorem}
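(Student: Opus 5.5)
The plan is to write \textit{Importance} as an average of per-neighbour indicators and to bound each indicator by a Lipschitz constant times the movement of that neighbour relative to $v$. Let $\mathcal{V}_\text{top-k}=kNN(\mathbf{emb},v,k)$, so $|\mathcal{V}_\text{top-k}|=k$. By Eq.~(\ref{def:impfunction}),
\begin{equation*}
Importance(f_\text{unsup}(\cdot),\mathcal{G},\mathcal{G}_s,v,k)=\frac{1}{|\mathcal{V}_\text{top-k}|}\sum_{u\in\mathcal{V}_\text{top-k}}\mathbbm{1}\left[u\notin kNN(\mathbf{emb'},v,k)\right].
\end{equation*}
It therefore suffices to show, for each $u\in\mathcal{V}_\text{top-k}$,
\begin{equation*}
\mathbbm{1}[u \text{ leaves the top-}k] \le \mathbf{C_{Lp}}\,\|\mathbf{M_{agg}}\|_2\,\|\Delta_v-\Delta_u\|_2 .
\end{equation*}
Here $\Delta_w$ denotes the change of the mean-aggregated neighbour feature of $w$:
\begin{equation*}
\Delta_w=\frac{1}{|\mathcal{N}_w|}\sum_{x\in\mathcal{N}_w}\mathbf{x_x}-\frac{1}{|\mathcal{N}'_w|}\sum_{x\in\mathcal{N}'_w}\mathbf{x_x},
\end{equation*}
where $\mathcal{N}'_w$ is the neighbourhood of $w$ in $\mathcal{G}-\mathcal{G}_s$.

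First I relate embedding changes to $\Delta$. For the one-layer mean-aggregator GraphSAGE, $\mathbf{h_w}=\sigma(\mathbf{M_{self}x_w}+\mathbf{M_{agg}}\bar{\mathbf{x}}_{\mathcal{N}_w})$. The self term does not depend on the edges, so it cancels under perturbation. Since the activation $\sigma$ is Lipschitz, I take $\mathbf{C_{Lp}}$ to absorb its Lipschitz constant together with the normalisation that turns distance changes into the indicator.

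Next I express the membership change through the relative vector $\mathbf{emb_v}-\mathbf{emb_u}$. The neighbour $u$ can leave the top-$k$ set only if $d(v,u)=\|\mathbf{emb_v}-\mathbf{emb_u}\|_2$ changes by a nonzero amount. The triangle inequality gives
\begin{equation*}
|d'(v,u)-d(v,u)|\le\|(\mathbf{emb'_v}-\mathbf{emb_v})-(\mathbf{emb'_u}-\mathbf{emb_u})\|_2 .
\end{equation*}
Before the activation, this difference equals $\mathbf{M_{agg}}(\Delta_v-\Delta_u)$. Its norm is therefore at most $\|\mathbf{M_{agg}}\|_2\|\Delta_v-\Delta_u\|_2$, up to the Lipschitz factor of $\sigma$.

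The main obstacle is converting this continuous quantity into a bound on the 0/1 indicator, since a small distance change can in principle flip membership. I would first define $\mathbf{C_{Lp}}$ so that it includes the reciprocal of the margin between the $k$-th and $(k+1)$-th neighbour distances. If the margin is $\gamma>0$, $u$ can drop out only when the distance change exceeds roughly $\gamma$, so $\mathbbm{1}\le |d'-d|/\gamma$. Alternatively, one can simply bound the indicator by $L\cdot|d'-d|$ whenever that product is at least $1$, which is the case if the indicator is $1$ under the margin assumption.

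This step also has a gap. Whether $u$ leaves the top-$k$ depends on the other nodes' movements as well, not only on $d(v,u)$. I would handle this by assuming the margin absorbs those movements, or by treating the result as an upper bound on the per-node relative displacement. The bound then follows by averaging the per-neighbour inequality over $\mathcal{V}_\text{top-k}$.
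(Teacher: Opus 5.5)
Your proposal does not prove the statement. The two places where it stalls are genuine gaps, not technicalities.

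\textbf{The indicator step.} After your first step, your chain matches the paper's: the reverse triangle inequality on $\mathbf{emb_v}-\mathbf{emb_u}$, cancellation of the $\mathbf{M_{self}}$ terms, a Lipschitz bound for $\sigma$, and the operator norm of $\mathbf{M_{agg}}$. Your rewriting of \textit{Importance} as an average of indicators is exact. However, the per-neighbour bound $\mathbbm{1}[u \text{ leaves the top-}k]\le C\,\|\Delta_v-\Delta_u\|_2$ fails for every constant $C$, margin-dependent or not. Suppose $\mathcal{G}_s$ removes an edge $(w,y)$ with $w,y\notin\{v\}\cup\mathcal{V}_{\text{top-}k}$. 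Alternatively, it removes an edge $(v,y)$ with $y\notin\mathcal{V}_{\text{top-}k}$ and $\mathbf{x_y}$ equal to the mean feature of $v$'s other neighbours, so that $\Delta_v=0$. In either case every $\Delta$ on the right-hand side is $0$. Yet the one-layer embedding of $w$ (resp.\ $y$) moves and can enter the top-$k$, giving \textit{Importance} $=1/k>0$. So the displayed bound itself fails in these instances, not just your per-neighbour version.

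Whether $u$ stays in the top-$k$ depends on displacements of nodes outside $\mathcal{V}_{\text{top-}k}$, which the right-hand side never sees. Letting the margin ``absorb the other nodes' movements'' therefore cannot close the gap. Separately, folding $1/\gamma$ into $\mathbf{C_{Lp}}$ already changes the statement. The paper does not derive this step either. It simply posits that \textit{Importance} is proportional to $\frac{1}{|\mathcal{V}_{\text{top-}k}|}\sum_{u}|\mathrm{dist}(\mathbf{h_v},\mathbf{h_u})-\mathrm{dist}(\mathbf{h'_v},\mathbf{h'_u})|$ and then bounds that surrogate. A correct write-up has to state this surrogate, or an explicit margin-and-locality hypothesis, as an assumption rather than something proved.

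\textbf{The activation step.} Your phrase ``up to the Lipschitz factor of $\sigma$'' does not follow for a nonlinear activation. Lipschitz continuity controls single differences, $\|\sigma(a)-\sigma(a')\|_2\le L\|a-a'\|_2$. It does not bound the double difference $\|(\sigma(a_v)-\sigma(a'_v))-(\sigma(a_u)-\sigma(a'_u))\|_2$ by $L\|(a_v-a'_v)-(a_u-a'_u)\|_2$. For a ReLU counterexample, take $\Delta_v=\Delta_u$, with $a_v,a'_v$ in the positive region and $a_u,a'_u$ in the negative region. The pre-activation double difference is $0$, but $\mathbf{h_v}$ moves while $\mathbf{h_u}$ does not. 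What Lipschitz continuity actually gives is $L\|\mathbf{M_{agg}}\|_2(\|\Delta_v\|_2+\|\Delta_u\|_2)$.

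The paper reaches $\|\Delta_v-\Delta_u\|_2$ only by rewriting differences of $\sigma$'s as $\sigma$ of differences in its chain of equalities, which is valid only for linear $\sigma$. You share that flaw rather than avoid it. To repair it, either restrict to a linear activation or replace $\|\Delta_v-\Delta_u\|_2$ by $\|\Delta_v\|_2+\|\Delta_u\|_2$.
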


\noindent where $\mathbf{\mathbf{C_{Lp}}}$ is a Lipschitz constant vector for the activation function in the GraphSAGE, $\mathbf{\mathbf{M_{agg}}}$ is a weight matrix that aggregates information from neighbors, and $\Delta_v$ is the difference between the embedding vector $f_\text{unsup}(v)$ (i.e., $\mathbf{emb_v}$) before and after the perturbation.

\vspace{1mm}

\textit{Proof.}
Since the change of top-$k$ nearest nodes in \textit{Importance} is proportional to the change of distance of top-$k$ nearest nodes in the perturbed embedding vector, it is expressed as below:

\vspace{2mm}
\resizebox{0.9\hsize}{!}{%
\begin{minipage}{\linewidth}
    \begin{equation}
        \begin{aligned}
        Importance(f_\text{unsup}(\cdot), \mathcal{G}, &\mathcal{G}_s, v, k)
        = |set(kNN(\mathbf{emb}, v, k))-set(kNN(\mathbf{emb'}, v, k))|/k \\
        &\propto \frac{1}{|\mathcal{V_\text{top-$k$}}|}\sum_{u\in \mathcal{V_\text{top-$k$}}}|dist(\mathbf{h_v}, \mathbf{h_u}) - dist(\mathbf{h{'}_v}, \mathbf{h{'}_u})|
        \end{aligned}
    \end{equation}
\end{minipage}    
}

\noindent where $\bf{h_v}$ means $\bf{emb_v}$, $\bf{h_u}$ means $\bf{emb_u}$, $\bf{h{'}_v}$ means $\bf{emb'_v}$, $\bf{h{'}_u}$ means $\bf{emb'_u}$ after perturbation for abbreviation. The top-$k$ nearest nodes $\mathcal{V_\text{top-$k$}}$ is output from the $kNN$ algorithm in the embedding space $\bf{emb}$. The function $dist(\bf{h_v}, \bf{h_u})$ is expressed as $||\bf{h_v} - \bf{h_u}||_{2}$ and we can define the upper bound of the importance using the Reverse triangle inequality below:


\vspace{2mm}
\resizebox{0.90\hsize}{!}{%
\begin{minipage}{\linewidth}
    \begin{equation}
        \label{equation:dist}
        \begin{aligned}
       |dist(\mathbf{h_v}, \mathbf{h_u}) - dist(\mathbf{h{'}_v}, \mathbf{h{'}_u})|&= | \ ||\mathbf{h_v} - \mathbf{h_u}||_{2} - ||\mathbf{h{'}_v} - \mathbf{h{'}_u}||_{2}  \ | \\
        &\leq ||(\mathbf{h_v}-\mathbf{h_u}) - (\mathbf{h{'}_v} - \mathbf{h{'}_u})||_{ 2}
        \end{aligned}
    \end{equation}
\end{minipage}    
}
\vspace{2mm}

In inductive settings, we assume that $\bf{h_v}$ is used in the same way as GraphSAGE ~\citep{hamilton2017inductive} with a single layer, where $\mathbf{M_{self}}$ and $\mathbf{M_{agg}}$ are trainable parameters, $\bf{x_v}$ is a feature of node $v$, $\mathcal{N}_v$ is a node set of node $v$'s neighbors, and $\sigma$ indicates an activation function.
\vspace{2mm}

\resizebox{0.88\hsize}{!}{%
\begin{minipage}{\linewidth}
    \begin{equation}
    \label{equation:sage}
    \bf{h_v} = \sigma(\mathbf{M_{self}} \bf{x_v} + \mathbf{M_{agg}} \frac{1}{|\mathcal{N}|}\sum_{u\in\mathcal{N}_v}\bf{x_u})
    \end{equation}
\end{minipage}    
}

\noindent Next, we substitute $\bf{h_v}$ from Equation (\ref{equation:dist}) into Equation (\ref{equation:sage}). This leads us to Equations (\ref{eq:8}) and (\ref{eq:9}), obtained by canceling out the terms $\mathbf{M_{self}} \bf{x_v}$ and $\mathbf{M_{self}} \bf{x_u}$, under the premise that the node features remain unchanged. In Equation (\ref{eq:lp_proof}), we then obtain the upper bound by introducing $\mathbf{C_{Lp}}$, which is referred to as the Lipschitz constant for the activation function. By applying the Cauchy-Schwartz inequality, we derive Equation (\ref{eq:cs_proof}). In this equation, $\Delta_v$ represents the change in the embedding vector before and after perturbation, a simplification made for the sake of brevity.

\resizebox{0.95\hsize}{!}{%
\begin{minipage}{\linewidth}
    \begin{alignat}{1}
        &||(\bf{h_v}-\bf{h_u}) - (\bf{h{'}_v} - \bf{h{'}_u})||_{2} \\
        = \ &||\sigma( \mathbf{M_{self}}(\bf{x_v} - \bf{x_u}) + \mathbf{M_{agg}}(\frac{1}{|\mathcal{N}_v|}\sum_{p\in\mathcal{N}_v}\bf{x_p} - \frac{1}{|\mathcal{N}_u|}\sum_{q\in\mathcal{N}_u}\bf{x_q})) \label{eq:8}\\ 
        &\ \ -\sigma(\mathbf{M_{self}}(\bf{x_v} - \bf{x_u}) + \mathbf{M_{agg}}(\frac{1}{|\mathcal{N}{'}_v|}\sum_{p\in\mathcal{N}{'}_v}\bf{x_p} - \frac{1}{|\mathcal{N}{'}_u|}\sum_{q\in\mathcal{N}{'}_u}\bf{x_q}))||_2  \label{eq:9}\\
        = \ & || \sigma(\mathbf{M_{agg}}(\frac{1}{|\mathcal{N}_v|}\sum_{p\in\mathcal{N}_v}\bf{x_p} - \frac{1}{|\mathcal{N}_u|}\sum_{q\in\mathcal{N}_u}\bf{x_q} \\
        & \ \ -\frac{1}{|\mathcal{N}{'}_v|}\sum_{p\in\mathcal{N}{'}_v}\bf{x_p} + \frac{1}{|\mathcal{N}{'}_u|}\sum_{q\in\mathcal{N}{'}_u}\bf{x_q})) ||_2 \\
        = \ &  || \sigma(\mathbf{M_{agg}}(\frac{1}{|\mathcal{N}_v|}\sum_{p\in\mathcal{N}_v}\bf{x_p} -\frac{1}{|\mathcal{N}{'}_v|}\sum_{p\in\mathcal{N}{'}_v}\bf{x_p} \\
        & \ \  - (\frac{1}{|\mathcal{N}_u|}\sum_{q\in\mathcal{N}_u}\bf{x_q}  - \frac{1}{|\mathcal{N}_u{'}|}\sum_{q\in\mathcal{N}_u{'}}\bf{x_q}))) ||_2 \\
        \leq  \ &\mathbf{C_{Lp}}(||\mathbf{M_{agg}}(\Delta_v - \Delta_u)||_2)  \label{eq:lp_proof}\\
        \leq \ &\mathbf{C_{Lp}}||\mathbf{M_{agg}}||_2||\Delta_v - \Delta_u||_2.  \ \ \ \ \blacksquare \label{eq:cs_proof}
    \end{alignat}
 \end{minipage}    
}
\vspace{3mm}


\noindent Theorem \ref{theorem1} establishes that the upper bound is influenced by the weight matrices $\mathbf{M_{agg}}$, $\Delta_v$, and $\Delta_u$. Assuming that the weight matrix $\mathbf{M_{agg}}$ is fixed after training (i.e., in an inductive setting), we describe four cases of how the importance score can be changed. First, when $\Delta_v$ and $\Delta_u$ are small, the importance score becomes a low value. Second, when $\Delta_v$ is large but $\Delta_u$ is small, the target vector is relatively changed even further, causing the importance score to become large. Conversely, if $\Delta_v$ is small, but $\Delta_u$ is large, while the neighboring nodes move to a different area, the target vector remains, leading to a large importance score. Lastly, when both $\Delta_v$ and $\Delta_u$ are large, causing the vector $v$ and its top-$k$ neighboring nodes to move in the same direction, then the importance becomes small, but in the opposite direction, resulting in significant importance. Overall, the theoretical analysis suggests that the absolute value of importance is contingent on $||\mathbf{M_{agg}}||_2$.

\section{More experiment results}


\subsection{Qualitive analysis of synthetic datasets}

We conduct a case study utilizing the ground truth of the synthetic datasets. As shown in Figure ~\ref{fig:case_study}, the yellow node represents the target node, while the black nodes and edges are parts of the subgraph obtained by each baseline model. It is important to note that the grey nodes and edges represent the $k$-hop neighboring nodes of the target node in the input graph, providing relevant contextual information. The \textbf{RW-G} method generates path-shaped subgraphs that fail to reach the ground truth, particularly noticeable on the BA-Shapes dataset. On the other hand, subgraphs generated by baselines such as \textbf{1hop-3N} and \textbf{1hop-2N} are included in the ground truth but prove insufficient when the ground truth exists beyond a 1-hop distance. In contrast, both \textbf{\mname{} w/o restart} and \textbf{\mname{}} more closely approximate parts of the ground truth. The difference between these two methods is not severe on small datasets, but generally, \textbf{\mname{}} encourages the subgraph to have more connections centered around the target node. In conclusion, we illustrate how \textbf{\mname{}} more accurately represents the ground truth, offering a superior explanation compared to other baselines.

\begin{figure*}[!t]
    \includegraphics[width=0.9\textwidth]{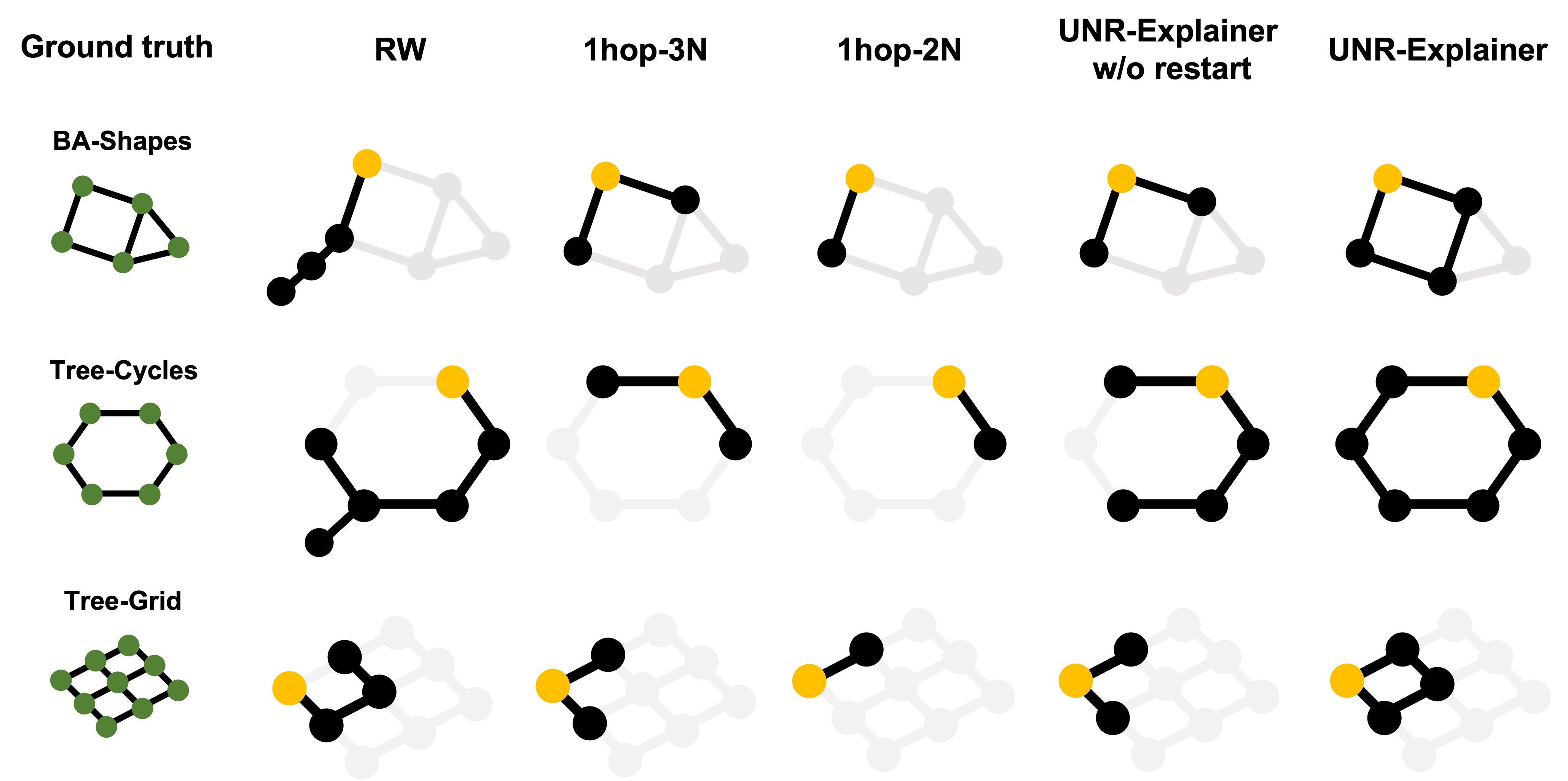} 
    \caption{The examples of obtained subgraphs by different models on synthetic datasets.}
    \label{fig:case_study}
\end{figure*}

\subsection{Details of case study}
Detailed information of nodes is described in Table \ref{alphabet}.

\begin{table}
\centering
\caption{Description of nodes representing names of authors on Figure ~\ref{fig:case-study}}
    \begin{tabular}{c|c}
    \hline
        Node & Name \\ \hline
        a & Caglar Gulcehr \\ \hline
        b & Razvan Pascanu \\ \hline
        c & Yann Dauphin \\ \hline
        d & Yoshua Bengio \\ \hline
        e & Guillaume Desjardins \\ \hline
        f & Kyunghyun Cho \\ \hline
        g & Surva Ganguli \\ \hline
        h & Guido Montufar \\ \hline
        i & Pieter Abbeel \\ \hline
        j & Volodymyr Mnih \\ \hline
        k & Koray Kavukcuoglu \\ \hline
        l & John Schulma \\ \hline
        m & Theophane Weber \\ \hline
        n & Nicolas Heess \\ \hline
    \end{tabular}
\label{alphabet}
\end{table}

\subsection{Description of the variants of the MCTS-based algorithm}
\label{exp:var:mcts}

In Table~\ref{t:mctsex}, we describe the detailed information about variants of the MCTS-based algorithm presented in Table~\ref{t:mcts}. The term \textbf{Action} in Table~\ref{t:mctsex} refers to the design of the action in MCTS: \textbf{Adding} represents the addition of nodes, while \textbf{Removing} denotes the deletion of nodes. The term \textbf{Expansion} refers to the design of the expansion strategy in MCTS. During child node expansion, \textbf{All} represents the expansion of all neighboring nodes as child nodes, while \textbf{Sample} means that we select nodes to match the average degree of the input graph from among the neighboring nodes, thereby reducing the search space. The term \textbf{Restart} indicates whether we apply our new selection policy, as explained in Section 3. The term \textbf{UCB} refers to the variant of the UCB formula (Eq.~(\ref{equ:ucb})). \textbf{Prx1}, \textbf{Prx2}, and \textbf{Prx3} represent different proximity measures. \textbf{Prx1} corresponds to the degree of a node in the input graph, assuming high-degree nodes are potentially more influential. \textbf{Prx2} represents the number of common neighboring nodes of the target node in the input graph, under the assumption that nodes with more common neighbors related to the target node are potentially more influential. \textbf{Prx3} corresponds to the cosine similarity between the current state of a node and the target node in the input graph, with the assumption that nodes with similar features are potentially more influential. However, these three proximity measures did not result in significant gains, contrary to our expectations, and increased the computational cost; as a result, none of them are applied in our method.

\textbf{Comparison with SubgraphX}

 The distinction between \mname{} and SubgraphX is primarily discussed in terms of the reward function, action definition, and exploration strategy in MCTS. \textbf{1)} The utilization of the reward function varies, as each is designed for distinct settings. SubgraphX employs the Shapley value in the reward function, which is not suitable for unsupervised settings without labels. On the other hand, our reward metric is specifically tailored to measure the importance of subgraphs in unsupervised settings. \textbf{2)} The definition of action in \mname{} differs significantly from SubgraphX. In SubgraphX, the action involves pruning nodes from the input graph until the number of remaining nodes reaches a predefined threshold. This approach is problematic when the subgraph serves as a counterfactual explanation, as more computation is required to achieve the desired size. In contrast, the action in \mname{} involves adding an edge to the candidate subgraph, a more efficient strategy for exploring minimal subgraphs. The efficiency of this action is demonstrated in Table \ref{t:mcts} which shows that SubgraphX-based methods generate larger subgraphs and require more inference time. \textbf{3)} We propose a new selection strategy, inspired by the Random walk restart, to promote more connections around the target node, suitable for our problem setting. \textbf{4)} Moreover, we employ a sampling method during the expansion stage to reduce the computational cost of exploration. Consequently, \mname{} is more efficient in searching for counterfactual explanatory subgraphs from the perspective of the MCTS-based algorithm.

\begin{table}
\centering
\caption{Description of the variants of the MCTS-based algorithm in Table~\ref{t:mcts}. We abbreviate the $E$ as $\sqrt{\frac{ln(C(N_0, a))}{C(N_i, a)}}$ due to the limiation of the space.}
\label{t6}
\begin{tabular}{c|c|c|c|c|c}

\hline
No. & Name & Action & Expansion & Restart & UCB  \\
\hline
1 & MCTS & Adding & All & X & Avg($W(N_{i}, a) + \lambda E $ \\
2 & SubgraphX-1 & Removing & All & X & Avg($W(N_{i}, a)$) + $\lambda E$ \\
3 & SubgraphX-2 & Removing & Sample & X & Avg($W(N_{i}, a)$) + $\lambda E$  \\
4 & MCTS-Avg & Adding & Sample &X & Avg($W(N_{i}, a)$) + $\lambda E$   \\
5 & MCTS-Prx1 & Adding & Sample &X & Max($W(N_{i}, a)$) + $\lambda \ Prx1 \ E$  \\
6 & MCTS-Prx2 & Adding & Sample &X & Max($W(N_{i}, a)$) + $\lambda \ Prx2 \ E$  \\
7 & MCTS-Prx3 & Adding & Sample & X& Max($W(N_{i}, a)$) + $\lambda \ Prx3 \ E$  \\
8 & \mname{} w/o r & Adding & Sample &X & Max($W(N_{i}, a)$) + $\lambda E$ \\
9 & \textbf{\mname{}} & Adding & Sample &O& Max($W(N_{i}, a)$) + $\lambda \ E$ \\
\hline

\end{tabular}
\label{t:mctsex}
\end{table}

\subsection{Time Complexity Analysis}

Let $t$ be the number of iterations, $n$ be the number of nodes in the search tree, $|V|$ be the number of vertices in our input graph $\mathcal{G}$, the time complexity of our \mname{} w/o restart in an inductive setting mainly depends on O($t \cdot log(n) \cdot |V|$). Additionally, the $O(|V|)$ is required for the \textit{kNN} function due to the simulation. When we employ \mname{} for selection, due to the probability of the restart $p_{restart}$, the time complexity for the search is $O(n)$ in the worst case. Therefore, the total time complexity to search the important subgraph as the explanation for the target node is O($t \cdot n \cdot |V|$). In the worst case, if \(n\) approaches \(|V|\), the complexity could become approximately \(O({|V|}^2)\). To circumvent this situation, there are two strategies: establishing stopping criteria and setting an upper bound for \(n\). Firstly, \(t\) is set to 1,000 as a stopping criterion, or the traversal is stopped when a case meeting \(Importance = 1.0\) is found. Secondly, the number of nodes in the traversal graph is limited to a constant value; in this paper, we set it to 20. Moreover, in every iteration, we sample at most 3 nodes for the expansion step, which helps avoid the exponential expansion of \(n\). For these reasons, the time complexity of UNR-Explainer is approximately \(O(|V|)\), because both \(t\) and \(n\) could be regarded as constants.

\subsection{Experimental setup}
To evaluate our proposed explanation method, we first train the unsupervised node representation model using GraphSAGE~\citep{hamilton2017inductive} and DGI~\citep{dgi}. The resulting trained embedding vector is then used for downstream tasks such as link prediction and node classification. For the node classification task, we use the synthetic datasets and divide the embedding vector into a random train and test subset with 80\% and 20\% of the data, respectively. For the real-world datasets, we perform the link prediction task, so we split the edges of the graph into random train and test subsets with 90\% and 10\% of the data. Top-k link prediction is evaluated by the AUC. A logistic regression model is employed for both the node classification and the performance is evaluated by calculating the accuracy for node classification.

Additionally, to assess the quality of the initial embeddings, we report Homogeneity using labels and Silhouette Coefficient without labels, which are commonly used in clustering tasks. The homogeneity measures how instances with the same labels are contained in the same cluster, and the number of clusters is set equal to the number of classes. The Silhouette Coefficient measures how well separated each cluster is. Higher values of homogeneity and Silhouette Coefficient indicate better performance of the clustering model The above experimental setup is described in Table \ref{t1}. Noteworthly, we set $k$ for top-$k$ neighbors as $5$, the exploration term $\lambda$ as $1$, and $p_{restart}$ as $0.2$.

\begin{table}
\centering
\caption{The statistics of each dataset and the performance of the downstream tasks on each dataset using trained unsupervised node representation vector.}
\label{t1}
\begin{tabular}{c|c|c|c}

\hline
Datasets & BA-Shapes & Tree-Cycle & Tree-Grid \\
\hline
$\#$ classes & 4 & 2 & 2  \\
$\#$ nodes & 700 & 871 & 1,231  \\
$\#$ edges & 2,055  & 971  & 1,565 \\
\hline
Model & GraphSAGE & GraphSAGE & GraphSAGE\\
Homogeneity & 0.314 & 0.013 & 0.015 \\
Silhouette & 0.612 & 0.998 & 0.97 \\
Task & node & node & node \\
ACC/AUC & 0.486 & 0.583 & 0.587 \\
\hline
\hline
Datasets & Cora & CiteSeer & Pubmed \\
\hline
$\#$ classes  & 7 & 6 & 3 \\
$\#$ nodes  & 2,708 & 3,312 & 19,717 \\
$\#$ edges & 5,429  & 4,732  & 44,338 \\
\hline
Model  & GraphSAGE & GraphSAGE & DGI \\
Homogeneity  & 0.471 & 0.215 & 0.006 \\
Silhouette & 0.177 & 0.160 & 0.073 \\
Task  & link & node & node \\
ACC/AUC & 0.909 & 0.830 & 0.695 \\
\hline

\end{tabular}
\hrule height 0pt
\end{table}

\subsection{Implementation of downstream tasks}
\begin{itemize}
\item \textbf{Node classification}: Using the trained embedding vector in unsupervised learning, we conduct the node classification task. The logistic regression model in Scikit-learn is used. We set the max iter as $300$.
\item \textbf{Link prediction}: Using the trained embedding vector in unsupervised learning, we conduct the link prediction task. We split the graph to generate the train and test dataset using PyTorch-Geometric~\citep{pytorch}. To predict the link between two nodes, we assume there is a link between the target node and its top-$k$ nearest nodes.
\end{itemize}

\subsection{Implementation of unsupervised node representation models} 

We employ the GraphSAGE~\citep{hamilton2017inductive} and DGI ~\citep{dgi} model to demonstrate the performance of our proposed method in the inductive setting. We note that the GraphSAGE model and DGI are implemented using the PyTorch geometric ~\cite{pytorch}. The detailed information is as below:
\begin{itemize}
\item \textbf{GraphSAGE}~\citep{hamilton2017inductive}:
We use the aggregating operator as follows: $\mathbf{x}^{\prime}_i = \mathbf{M_{self}} \mathbf{x}_i + \mathbf{M_{agg}} \cdot \mathrm{mean}_{j \in \mathcal{N}_i} \mathbf{x}_j$. Here, while $\mathbf{x}$ means node features, $\mathbf{M_{self}}$ and  $\mathbf{M_{agg}}$ are the trainable parameters, and $\mathcal{N}_i$ represents local neighboring nodes around node $i$. We set the hyperparameters as follows: the batch size as $256$, the number of hidden dimensions as $64$, the number of hidden layers as $2$, the dropout as $0.5$, and the optimizer as Adam. On BA-shapes, we set the number of epochs as $100$ and the learning rate as $0.01$. On other datasets, we set the number of epochs as$150$ and the learning rate as $0.01$.

\item \textbf{DGI} ~\citep{dgi}: We use the GraphSAGE as the encoder of the DGI model, setting the number of hidden dimensions as $512$, batch size as $256$, the number of hidden channels as $512$, the number of hidden layers as $2$, dropout as $0.5$, the number of epochs as $50$, optimizer as Adam, and the learning rate as $0.001$. 
\end{itemize}

\subsection{Baselines in unsupervised learning}
\begin{itemize}
  \item \textbf{Taxonomy Induction}~\citep{tinduc}: We implement the model using the code and the setting of hyperparameters from the authors in Matlab. We set the number of clusters as $50$ for all real-world datasets and $5, 3, 2$ for BA-Shapes, Tree-Cycles, and Tree-Grids respectively.
  \item \textbf{TAGE} ~\citep{tage}: Since TAGE is available to provide the explanation for the embedding vector, we do not utilize the second stage which connects to the downstream task MLP to evaluate the explainer in an unsupervised manner. We set the number of epochs as $100$, the learning rate as $0.0001$, $k$ as $5$, the gradient scale as $0.2$, $\lambda_s$ as $0.05$, and $\lambda_{e}$ as $0.002$. The implementation is followed by \citep{tage}.
\end{itemize}

\subsection{Packages required for implementations}
\begin{itemize}
  \item   python == 3.9.7
  \item   pytorch == 1.13.1
  \item   pytorch-cluster == 1.6.0
   \item  pyg == 2.2.0
   \item  pytorch-scatter == 2.1.0
    \item pytorch-sparse == 0.6.16
    \item cuda == 11.7.1
    \item numpy == 1.23.5
    \item tensorboardx == 2.2
    \item networkx == 3.0
    \item scikit-learn == 1.1.3
    \item scipy == 1.9.3
    \item pandas == 1.5.2
\end{itemize}

\section{Limitation, Future Work, and Negative Societal Impacts}
We focus primarily on providing counterfactual explanations for unsupervised node representation models. However, it's important to note that actionability and diversity are also key properties to consider in these explanations. A well-known example of counterfactual explanations is the case of credit application prediction discussed in \citep{cfsurvey}. Counterfactual explanations offer more than mere output explanations; they also provide actionable recommendations, especially when a loan application is rejected. Additionally, providing multiple explanations enhances the diversity of possible actions, thereby allowing applicants to select the most suitable options. However, defining the desired action in unsupervised settings is challenging. Since the desired action is closely linked to the problem settings and datasets, specific assumptions, akin to the loan application example, are required. Therefore, we leave the exploration of actionability and diversity in counterfactual explanations for unsupervised models to future work. As of the current stage of our research, we have not recognized any negative societal impacts associated with this work.